\definecolor{mygray}{gray}{.85}
\definecolor{myyellow}{RGB}{204,102,0}
\definecolor{myred}{RGB}{204,0,102}
\definecolor{mypurple}{RGB}{102,0,204}
\definecolor{maroon}{cmyk}{0,0.87,0.68,0.32}
\definecolor{myblue}{RGB}{227,227,240}
\theoremstyle{plain}
\newtheorem{theorem}{Theorem}[section]
\theoremstyle{definition}
\newtheorem{assumption}[theorem]{Assumption}
\theoremstyle{remark}
\newcommand{\model}{GeoPro\xspace}
\title{Joint Design of Protein Sequence and Structure based on Motifs}
\author{%
  Zhenqiao Song$^{1}$, Yunlong Zhao$^{2,3}$, Yufei Song$^{1}$, Wenxian Shi$^{4}$, Yang Yang$^{2*}$, Lei Li$^{5}$\thanks{Corresponding author.}\ \\
  $^1$Department of Computer Science, University of California Santa Barbara \\
  $^2$Department of Chemistry and Biochemistry, University of California Santa Barbara\\
  $^3$Department of Chemistry, Massachusetts Institute of Technology \\ 
  $^4$Department of EECS, Massachusetts Institute of Technology \\
  $^5$Language Technology Institute, Carnegie Mellon University\\
  \texttt{\{zhenqiao,yufei\_song,yang89\}@ucsb.edu}, \texttt{leili@cs.cmu.edu
} \\
\texttt{\{yunlongz,wxsh\}@mit.edu
} \\
  % examples of more authors
  % \And
  % Coauthor \\
  % Affiliation \\
  % Address \\
  % \texttt{email} \\
  % \AND
  % Coauthor \\
  % Affiliation \\
  % Address \\
  % \texttt{email} \\
  % \And
  % Coauthor \\
  % Affiliation \\
  % Address \\
  % \texttt{email} \\
  % \And
  % Coauthor \\
  % Affiliation \\
  % Address \\
  % \texttt{email} \\
}
\begin{document}

\maketitle

\begin{abstract}
Designing novel proteins with desired functions is crucial in biology and chemistry.
However, most existing work focus on protein sequence design, leaving protein sequence and structure co-design underexplored.
In this paper, we propose \model, a method to design protein backbone structure and sequence jointly. 
Our motivation is that protein sequence and its backbone structure constrain each other, and thus joint design of both can not only avoid nonfolding and misfolding but also produce more diverse candidates with desired functions.
To this end, \model is powered by an equivariant encoder for three-dimensional~(3D) backbone structure and a protein sequence decoder guided by 3D geometry. 
Experimental results on two biologically significant metalloprotein datasets, including $\beta$-lactamases and myoglobins, show that our proposed \model outperforms several strong baselines on most metrics. 
Remarkably, our method discovers novel $\beta$-lactamases and myoglobins which are not present in protein data bank (PDB) and UniProt. These proteins exhibit stable folding and active site environments reminiscent of those of natural proteins, demonstrating their excellent potential to be biologically functional.
\end{abstract}

\section{Introduction}
A fundamental problem in protein engineering is designing novel proteins with desired biochemical functions such as catalytic activity \cite{fox2007improving}, therapeutic efficacy \cite{lagasse2017recent}, and fluorescence \cite{biswas2021low}.
Proteins embody their function through spontaneous folding of amino acid sequences into three dimensional~(3D) structures  \cite{go1983theoretical,chothia1984principles,starr2017exploring}.
In particular, protein's biochemical function is controlled by a subset of residues known as functional sites, or motifs \cite{wang2022scaffolding}.
Therefore, designing stably-folded proteins given a set of motifs is a promising direction to functional protein design.

In early representative work \cite{jiang2008novo,siegel2010computational}, manually prepared rules are applied to discover motifs.
Those rules are written by people with domain knowledge gained from the careful investigation of a specific protein family, which makes it hard to scale up to a wide number of protein families. 
Additionally, these efforts necessitate laborious trials and errors, making the overall process resource- and time-consuming.

\begin{figure}[htbp]
\begin{minipage}[t]{0.33\linewidth}
\centering
 \includegraphics[height=3.0cm]{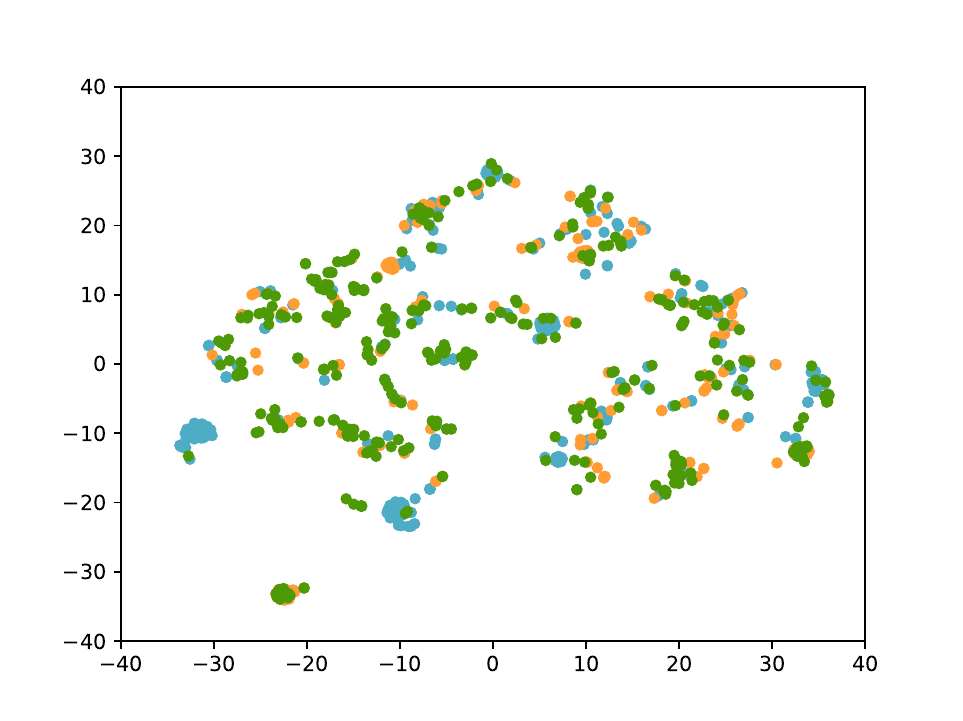}
\centerline{(a) ESM-2}
\end{minipage}%
\begin{minipage}[t]{0.33\linewidth}
\centering
\includegraphics[height=3.0cm]{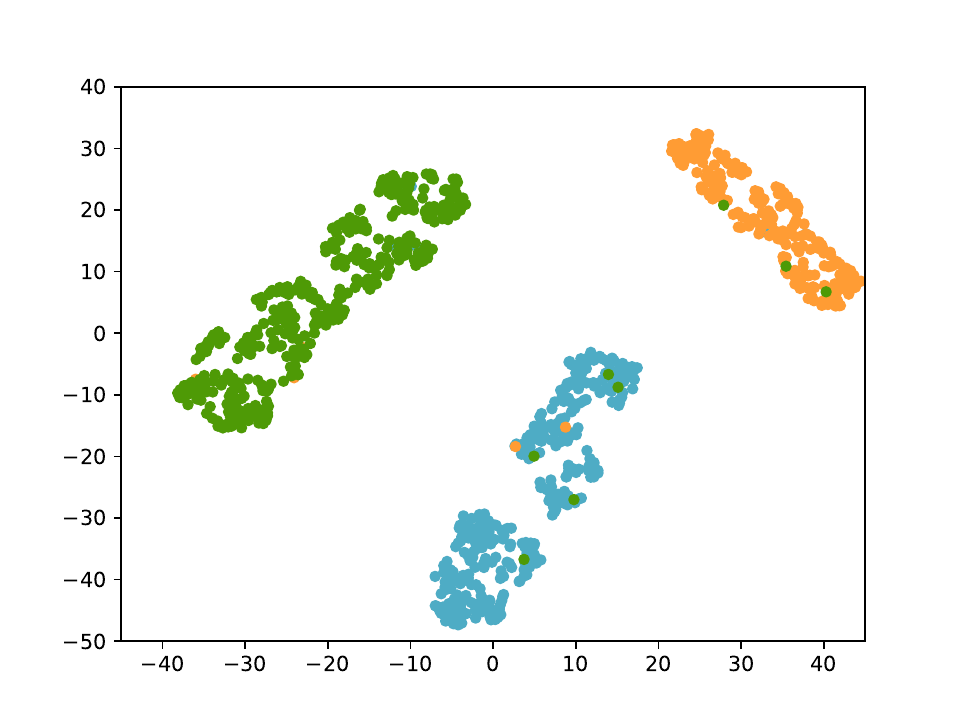}
\centerline{(b) Backbone Coordinate}
\end{minipage}%
\begin{minipage}[t]{0.33\linewidth}
\centering
\includegraphics[height=3.0cm]{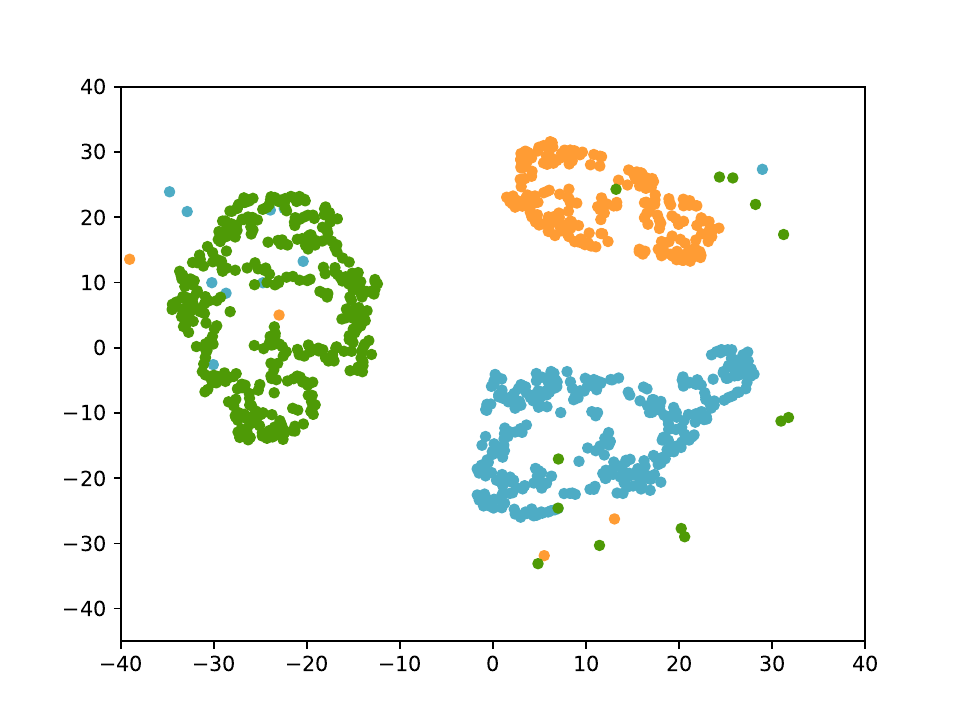}
\centerline{(c) Motif-Guided Decoder}
\end{minipage}
	\caption{Visualization of three proteins of myoglobin, each containing many instances obtained by different crystal methods. Figure~1(a) is the sequence representation from ESM-2~\cite{lin2022language}. Figure~1(b) shows the same proteins obtained by different crystal methods are closer to each other in $3$D space. Figure~1(c) demonstrates protein sequence representations with closer $3$D structure would also be clustered together after being revised by functional motifs.} 
 \label{figure_latent}
\end{figure}

To address these problems, machine learning methods have been employed to automate and improve the efficiency of protein design.
Recent efforts in this field primarily focus on either protein sequence design guided by fitness landscapes~\cite{brookes2019conditioning,gupta2019feedback, jain2022biological,ren2022proximal} or sequence generation from given 3D structures, also known as the inverse-folding problem~\cite{ingraham2019generative,hsu2022learning,dauparas2022robust,jing2020learning}.
However, the former approaches do not consider any structural information, oftentimes leading to unfolded or misfolded proteins, while the latter one depend on an entire protein structure, leading to limited candidate diversity and novelty. 
A more natural way is to co-design protein sequence and structure based on critical motifs.
However, current work either require  manually selected motifs by biochemical experts~\cite{anishchenko2021novo,wang2022scaffolding}, which is difficult to generalize to arbitrary proteins, or to extend design based on secondary structure topology~\cite{anand2022protein,shi2022protein}, which can not guarantee the designed proteins exhibit desired functions.
%, which guides the design process by the functional region while also allowing for flexibility outside the motifs.

% \citet{wang2021deep} propose inpainting method to recover missing regions based on the surrounding protein context. However, they need to prespecify an inpainting length between two contextual segments, which is unreasonable in reality since it is hard to decide without any prior knowledge.
% \citet{trippe2022diffusion} targets at creating stable scaffold supporting a target motif, and then generate a sequence based on the designed scaffold leveraging a well-trained ProteinMPNN~\cite{dauparas2022robust}. The issue of this pipeline is the sequence and structure can not benefit from each other during the design process.  
% Therefore, how to design desirable and novel proteins through the interaction between sequence and structure is still an under-studied and challenging task.

In this paper, we propose \model, an approach to design protein sequence and backbone structure jointly. 
Our motivation is that protein sequence and its backbone structure constrain each other, and thus joint design of both can not only avoid nonfolding and misfolding but also produce more diverse candidates with desired functions. 
\model first uses a fine-tuned ESM-2~\cite{lin2022language} to encode an initial protein sequence with only motifs into contextual representation. 
We design an equivariant graph neural network (EGNN) to encode and predict 3D backbone structure. 
This backbone structure encoder will  refine protein residue representations through their neighboring interactions in 3D space. 
We further design a sequence decoder to generate the full protein sequence given the  EGNN representations. 
\model predicts the backbone structure while incorporating contextual residue representations, and inpaints the protein sequence based on functional motifs. 
Our approach is advantageous because the mutual constraints between backbone structure and sequence facilitates the design of  stably-folded functional proteins.
We provide a theoretical proof that structures within a bounded variance lead to sequences that belong to the same protein, as illustrated in Figure~\ref{figure_latent}. This finding verifies that our discovered novel proteins are highly likely to be biologically functional.

We carry out extensive experiments on two metalloproteins, including $\beta$-lactamase and myoglobin, and compare the proposed model with several strong baselines. The contribution of this paper are listed as follows: 
\begin{itemize}[nosep,leftmargin=2.6em]
    \item We propose \model to co-design protein sequence and backbone structure. It is able to  design diverse, novel, and functional proteins that are not recorded in PDB and  UniProt~\footnote{\scriptsize{UniProt is a huge protein sequence database.}}.
    \item Experiments show that \model achieves highest performance on most metrics. The designed $\beta$-lactamases and myoglobins exhibit stable folding and can respectively bind their metallocofactors including zinc and heme, validating their excellent potential to be functional proteins.
\end{itemize}

\section{Related Work}
\textbf{Generative Protein Design}
Protein sequence design has been studied with a wide variety of methods, including traditional directed evolution \cite{arnold1998design,dalby2011strategy,packer2015methods,arnold2018directed} and machine learning methods~\cite{belanger2019biological,angermueller2019model,moss2020boss,terayama2021black}.
Following the success of deep generative models, there are some work focusing on protein sequence design with specific functions, aka. fitness.
They either search satisfactory sequences using deep generative models \cite{brookes2018design,brookes2019conditioning,madani2020progen,kumar2020model,das2021accelerated,hoffman2022optimizing,melnyk2021benchmarking,anishchenko2021novo,ren2022proximal}, or directly generate protein sequences applying deep generative models~\cite{jain2022biological,song2023importance}.
Another class of methods focus on inverse-folding problem~\cite{fleishman2011rosettascripts,ingraham2019generative,xiong2020increasing,mcpartlon2022deep,hsu2022learning},which targets at producing a protein sequence that can fold into a given structure.
Both approaches lack consideration for $3$D structure design, resulting in constrained accuracy and novelty in the design outcomes.
% Neither approach considers designing $3$D structure, leading to limited design accuracy and novelty.

\textbf{3D Protein Design}
\citet{wang2022scaffolding} propose inpainting method to recover both missing protein sequence and structure based on given motif segments. However, they need to pre-specify a possible inpainting length range, which is hard even for a biochemical expert.
\citet{trippe2022diffusion} frame motif-scaffolding problem as a conditional sampling process in diffusion models.
However, they only consider designing novel structures, which may not fully utilize the inherent correlation between protein sequence and structure.
\citet{anand2022protein} first propose to co-design protein sequence and structure conditioning on given secondary structures~(SS). Following their work, \citet{shi2022protein} propose to realize design conditioning on SS and contact map. However, knowing the topology of a protein before design process is difficult and also cannot guarantee the designed proteins have the desired functions which are mostly determined by side-chain of residues at functional sites.

In this paper, we focus on joint design of protein sequence and structure conditioning on critical motifs, which leverages the relationship between protein sequence and backbone structure to help design not only stably-folded but also diverse and novel proteins with desired functions.

\section{Background}
\subsection{Equivariance and Invariance}
\textbf{Equivariant Function}
A function $f$ is said to be equivariant to the action of a group $\mathcal{G}$ if $T_g(f(x))$ = $f(S_g(x))$ for all $g \in \mathcal{G}$, where $S_g$, $T_g$ are linear representations related to the group element~\cite{serre1977linear}. 
In this work, we consider the Euclidean group $E(3)$ generated by translations, rotations and reflections, for which $S_g$ and $T_g$ can be represented by a translation $t$ and an orthogonal matrix $R$ that rotates or reflects coordinates. $f$ is then equivariant to a translation $t$, rotation or reflection $R$ if transforming its input results in an equivalent transformation of its output, i.e. $f(Rx+t)=Rf(x)+t$.

\textbf{Invariant Distribution}
In our setting, a conditional distribution $p(y|x)$ is invariant to the action of rotation or reflection $R$ when:
\begin{equation}
\small 
p(y|x)=p(Ry|Rx)\; \text{or} \; p(y|x)=p(y|Rx)\qquad  \text{for all orthogonal matrix $R$} 
\end{equation}
% A conditional distribution is also invariant to $R$ transformations if:
% \begin{equation}
% \small 
% p(y|x)=p(y|Rx)\qquad  \text{for all orthogonal matrix $R$} 
% \end{equation} 
% \citet{xu2022geodiff} proved that if x $\sim$ p(x) is invariant to a group and the transition probabilities of a Markov chain y $\sim$ p(y|x) are equivariant, then the marginal distribution of y is invariant to group transformations as well.

\subsection{E(n) Equivariant Graph Neural Networks (EGNNs)}
EGNNs~\cite{satorras2021n} are a type of Graph Neural Network that satisfies the equivariance constraint. In this work, we consider interactions between all C$_\alpha$ in the backbone structure, and therefore assume a fully connected graph $\mathcal{G}$ with nodes $v_i \in V$.
Each node $v_i$ is endowed with coordinates $x_i \in R^3$ as well as $d$-dimensional features $h_i \in R^d$. In this setting, EGNN consists of the composition of equivariant convolutional layers~(EGCL): $x^{(l+1)},h^{(l+1)} = \text{EGCL}[x^l, h^l]$, which are defined as:
\begin{equation}
\begin{split}
\small 
m_{ij}&=\phi_e(h_i^l, h_j^l, d_{ij}^2, a_{ij}), \quad h_i^{l+1} = \phi_h(h_i^l, \sum_{j\ne i}\Tilde{e}_{ij}m_{ij}) \\
x_i^{l+1} &= x_i^l + \sum_{j\ne i}\frac{x_i^l-x_j^l}{d_{ij}+1}\phi_x(h_i^l, h_j^l, d_{ij}^2, a_{ij})
\end{split}
\label{equation_egnn}
\end{equation}
where $l$ indexes the layer, and $d_{ij} = ||x_i^l-x^l_j||_2$ is the Euclidean distance between nodes $(v_i, v_j)$, and $a_{ij}$ are optional edge attributes. 
$\Tilde{e}_{ij}=\phi_{inf}(m_{ij})$ is the attention score. 
All learnable components $(\phi_e,\phi_h,\phi_x,\phi_{inf})$ are parametrized by fully connected neural networks.
An entire EGNN architecture is then composed of $L$ EGCL layers, which applies the following non-linear transformation: $\hat{x}, \hat{h} = \text{EGNN}[x^0, h^0]$. This transformation satisfies the required equivariant property:
\begin{equation}
\small 
R\hat{x} + t, \hat{h} = \text{EGNN}[Rx^0 + t, h^0]\qquad   \text{for all orthogonal matrix $R$ and $t\in R^3$}
\label{equation:EGNNs}
\end{equation}

\section{Methods}
\begin{figure*}
  \centering
  \includegraphics[width=13.5cm]{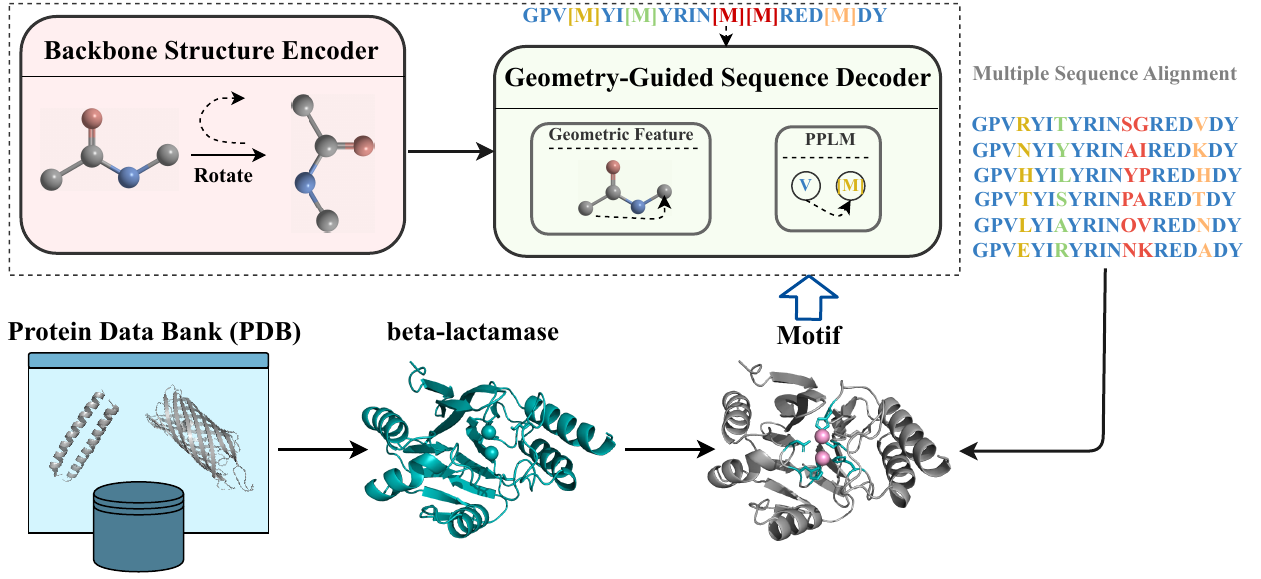}
  \vspace{-0.6em}
  \caption{The overall architecture of the proposed \model.}
  \label{Fig: model}
\end{figure*}

In this section, we describe our method in detail.
We first give the problem formulation in \ref{subsection4.1}.
Then we detail the interative process in which a backbone structure encoder predicts the backbone structure by incorporating the contextual residue representations in \ref{subsection4.2}, and a geometry-guided decoder inpaints the protein sequence based on the functional geometry-guided structure information in \ref{subsection4.3}.
Finally, we provide a theoretical proof in \ref{subsection4.4} that protein structures within a bounded variance lead to sequences which belong to the same protein. The overall architecture is illustrated in Figure~\ref{Fig: model}.
    
\subsection{Problem Formulation}
\label{subsection4.1}
Our goal is to design a protein with desired function by co-designing its $3$D backbone structure $x=\{x_1, x_2, ..., x_L\}$ and amino acid sequence $y=\{y_1, y_2, ..., y_L\}$ conditioning on a given motif $z=\{z_b, z_s\}$. 
L is the protein sequence length and $x_i \in R^3$ is the $3$D coordinate of the C$_\alpha$ of $i$-th amino acid $y_i \in \mathcal{V}$ -- $\mathcal{V}$ consists of 20 amino acids. 
A given motif z consists of backbone structure segments $z_b$ and sequence segments $z_s$.
To achieve this goal, we aim to maximize the joint probability of $x$ and $y$ conditioning on $z$:
\begin{equation}
\small 
p(x,y|z)=p(x|z; \theta)p(y|z; \phi)
\end{equation}
where $p(x|z;\theta)$ is modeled by the backbone structure encoder with parameter $\theta$, and $p(y|z; \phi)$ is modeled by the geometry-guided sequence decoder with parameter $\phi$.
% where $\theta$ and $\phi$ are the backbone structure encoder and geometry-guided sequence decoder parameters respectively. 
We will prove the joint probability $p(x,y|z)$ is invariant to the translation and rotation actions on $z$ in the following subsections.  

\subsection{Backbone Structure Encoder}
\label{subsection4.2}
The backbone structure encoder utilizes contextual residue representations to predict the protein backbone structure, while also refining the node features through their interactions with neighboring residues in $3$D space.

Specifically, we leverage EGNN (Equation~\ref{equation_egnn} and ~\ref{equation:EGNNs}) as the backbone structure encoder:
\begin{equation}
\small 
\hat{x}, \hat{h} = \text{EGNN}[x^0, h^0|; \theta]
\label{equation: whole_egnn}
\end{equation}
where $x^0=z_b\cup \hat{x}^0_{-z_b}$ is the initialized $3$D backbone coordinates, in which the motif $z_b$ keeps fixed while other nodes $\hat{x}^0_{-z_b}$ are sequentially initialized on a spherical surface to their neighboring amino acid. 
In particular, we find the Euclidean distance between every neighboring C$_\alpha$ is almost the same~(around $r=3.75$\AA).
Therefore, for a node $z_{k+1} \in \hat{x}^0_{-z_b}$, we initialize  it as a spherical surface with the center equals to its nearest neighbor $z_k \in z_b$:
\begin{equation}
\small 
\hat{x}^0_{k+1} = [z_k[0]+r\cdot\sin{\omega_1}\cos{\omega_2}, z_k[1]+r\cdot\sin{\omega_1}\sin{\omega_2}, z_k[2]+r\cdot\cos{\omega_1}]
\end{equation}
where $\omega_1 \sim \text{Uniform}(0, \pi)$ and $\omega_2 \sim \text{Uniform}(0, 2\pi)$. 
For $h^0$, we directly apply a pretrained protein language model~(PPLM)~\cite{lin2022language} to encode the corrupted protein sequence $\Tilde{y}=z_s\cup \Tilde{y}_{-z_s}$ with $\Tilde{y}_{-z_s}$ equals to [mask] symbol: $h^0=\text{PPLM}(\Tilde{y})$.
We initialize node features in this way because we believe a PPLM should provide some contextual information based on the large-scale protein sequences it has seen in the pretraining stage.

After getting the revised backbone structure, we minimize the Euclidean distance to learn the model:
\begin{equation}
\small
\mathcal{L}_b = \sum_{x_j \in x, x_j \not\in z_b} ||x_j -\hat{x}_j||_2^2
\end{equation}
which equals to maximize a three-dimensional Gaussian distribution whose mean vector equals to $x_j$ and covariance matrix equals to identity matrix for each node $x_j$:
\begin{equation}
\small 
\begin{split}
\log p(x|z;\theta)&=\log \Pi_{x_j \in x, x_j \not\in x_b} p(x_j|z;\theta) = \sum_{x_j \in x, x_j \not\in x_b} \log p(x_j|z;\theta)\\
\log p(x_j|z;\theta)&=\log \{\frac{1}{\sqrt{(2\pi)^3}}\exp{(-\frac{1}{2}(x_j -\hat{x}_j)^T(x_j -\hat{x}_j))}\} = -||x_j -\hat{x}_j||_2^2 + \text{const}
\end{split}
\end{equation}
As shown in Equation~\ref{equation:EGNNs}, EGNN satisfies the equivariance constraint, so we have:
\begin{equation}
\small 
\begin{split}
p(Rx_j+t|Rz+t;\theta)&=\frac{1}{\sqrt{(2\pi)^3}}\exp{(-\frac{1}{2}(Rx_j+t -R\hat{x}_j-t)^T(Rx_j+t -R\hat{x}_j-t))} \\
&=\frac{1}{\sqrt{(2\pi)^3}}\exp{(-\frac{1}{2}(x_j-\hat{x}_j)^TR^TR(x_j -\hat{x}_j))} 
=p(x_j|z;\theta)
\end{split}
\label{equa: back_invariance}
\end{equation}
where $R$ is an orthogonal matrix. 
Therefore, we can see the distribution of predicting backbone structure is invariant to the rotation or translation actions on motif.

The backbone structure encoder enables us to not only predict the backbone structure but also enhance the residue representations by considering their interactions with neighboring residues in $3$D space. This refinement process proves advantageous for the subsequent sequence inpainting procedure.

\subsection{Geometry-Guided Sequence Decoder}
\label{subsection4.3}
The geometry-guided sequence decoder~(GSD) reconstructs the protein sequence by utilizing functional-geometry structure information, thereby facilitating the discovery of novel proteins that exhibit similar biological functions to natural ones.

In particular, we leverage the residue features $\hat{h}$ revised by motif geometry to reconstruct the original protein sequence as follows:
\begin{equation}
\small 
p(y|z; \phi) =\prod_{y_j \in y, y_j \not\in z_s} p(y_j|z;\phi)= \text{GSD}(f(\hat{h}); \phi),\qquad 
f(\hat{h}_j) = \left\{
\begin{aligned}
\hat{h}_j &,  & {y_j \in z_s,} \\
\text{Emb[mask]} &, & {\text{otherwise}}
\end{aligned}
\right.
\end{equation}
where $\hat{h}_j$ is the output of backbone structure encoder of the $j$-th token in Equation~\ref{equation: whole_egnn}.
Here we use a finetuned ESM-2~\cite{lin2022language} to initialize GSD. Then to optimize GSD, we minimize the negative log likelihood of the none-motif parts:
\begin{equation}
\small 
\mathcal{L}_s = \sum_{y_j \in y, y_j \not\in z_s}-\log p(y_j|z;\phi) 
\end{equation}

As shown in Equation~\ref{equation:EGNNs}, $\hat{h}$ is invariant to the rotation or translation actions on $x^0$, and thus we can verify that $p(y|z; \phi)$ is invariant:
\begin{equation}
\small
p(y|Rz+t; \phi)=\text{GSD}(f(\hat{h});\phi)=p(y|z;\phi)
\label{equa: seq_invariance}
\end{equation}
Combining the conclusions in Equation~\ref{equa: back_invariance} and \ref{equa: seq_invariance}, we can prove the joint probability $p(x,y|z)$ is invariant to the translation and rotation actions on z:
\begin{equation}
\small 
p(Rx+t, y|Rz+t)=p(Rx+t|Rz+t; \theta)p(y|Rz+t;\phi)=p(x|z; \theta)p(y|z; \phi)=p(x, y|z)
\end{equation}
Accordingly, our overall training objective is defined as:
\begin{equation}
\small
\mathcal{L} = \alpha*\mathcal{L}_b + \beta * \mathcal{L}_s
\label{equation_training}
\end{equation}
where $\alpha$ and $\beta$ respectively control the importance of these two terms.
Jointly minimizing objective~\ref{equation_training} equals to maximize the joint probability $p(x, y|z)$.

\subsection{Theoretical Analysis}
\label{subsection4.4}
For each protein in PDB, there might be several instances due to different crystal methods, which have the same  or different sequences as well as slightly different coordinates.
Sequence representations incorporating this kind of geometric information would get much closer to each other than other different proteins as shown in Figure~\ref{figure_latent}(c). 
We can prove that our functional geometry-guided sequence decoding loss has an upper bound. It means proteins having the similar structures are mapped to the geometry-guided representations in a manner that are well-separated from others.
This theoretically verifies that our discovered novel proteins are highly possible to be realistic ones due to their similar active site environments to natural proteins.

In particular, we can regard our sequence reconstruction process as an auto-encoding process~\cite{kingma2013auto} with some perturbation $\mathcal{C}$: $\Tilde{y}=\mathcal{C}(y)$.
Here the encoding process is $g=E(y)=\text{EGNN}(\text{PPLM}(\mathcal{C}(y)))$, and decoding performs $y=\text{GSD}(g)$.
Let $\sigma$ denote the sigmoid function.

\begin{assumption}
The decoder D can approximate arbitrary $p(y|g)$ so long as it remains sufficiently Lipschitz continuous in g. Namely, there exists L > 0 such that decoder D obtainable via training satisfies: ${\forall} y \in \mathcal{Y}, {\forall} g^1, g^2 \in \mathcal{Z}$, $|\log p(y|g^1) -\log p(y|g^2)|\leq L||g^1-g^2||$. (We denote this set of decoders $\mathcal{D}_L$.)
\end{assumption}

\begin{theorem}
\label{theorem_1}
Suppose $\{y^1, y^2, ..., y^n\}$ are belonging to n/K different proteins of equal size K instances, with $s_i$ denoting the specific protein of $y^i$. Suppose $p(y^i|\Tilde{y}^j)=1/K$ if $s_i=s_j$ and 0 otherwise. With a deterministic encoder mapping E from $\{y^1, y^2, ..., y^n\}$ to $\{g^1, g^2, ..., g^n\}$, the denoising objective $\max_{D\in \mathcal{D}_L} \frac{1}{n}\sum_{i=1}^n \sum_{j=1}^n p(y^j|\Tilde{y}^i)\log p(y^i|E(y^j))$ has an upper bound: $\frac{1}{n^2} \sum_{i,j:s_i\ne s_j} \log \sigma(L||E(y^i)-E(y^j)||)-\log K$.
\end{theorem}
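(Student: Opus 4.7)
The plan is to reduce the objective to a weighted sum of $\log p(y^i \mid E(y^j))$ over same-protein pairs, extract the $-\log K$ term via Jensen on per-sample probability masses $q_j := \sum_{i:\,s_i = s_j} p(y^i\mid E(y^j))$, and then derive a sigmoid upper bound on $q_j + q_{j'}$ for cross-protein pairs by coupling the probability normalization of the decoder with the Lipschitz assumption. Throughout, write $g^j := E(y^j)$ and $a_{ij} := p(y^i \mid g^j)$.

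Substituting $p(y^j \mid \tilde y^i) = 1/K$ when $s_j = s_i$ and $0$ otherwise collapses the objective to $\frac{1}{nK}\sum_{j}\sum_{i:\,s_i = s_j}\log a_{ij}$. For each fixed $j$, concavity of $\log$ gives $\frac{1}{K}\sum_{i:\,s_i = s_j}\log a_{ij} \leq \log(q_j/K)$, so the objective is at most $-\log K + \frac{1}{n}\sum_j \log q_j$. It therefore suffices to show $\frac{1}{n}\sum_j \log q_j \leq \frac{1}{n^2}\sum_{i,j:\,s_i \neq s_j}\log\sigma(L\|g^i - g^j\|)$.

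The crux is a pairwise bound. Fix $j$ and $j'$ with $s_j \neq s_{j'}$. Because $a_{\cdot j}$ is a probability distribution on $\mathcal{Y}$, $q_j + \sum_{i:\,s_i = s_{j'}} a_{ij} \leq 1$. The Lipschitz assumption applied in the conditioning argument yields $a_{ij} \geq a_{ij'}\,e^{-L\|g^j - g^{j'}\|}$ for every $i$, so summing over $i$ with $s_i = s_{j'}$ turns the previous inequality into $q_j + q_{j'}\,e^{-L\|g^j - g^{j'}\|} \leq 1$. The symmetric inequality with $j$ and $j'$ swapped holds as well; adding them gives $(q_j + q_{j'})(1 + e^{-L\|g^j - g^{j'}\|}) \leq 2$, i.e. $q_j + q_{j'} \leq 2\sigma(L\|g^j - g^{j'}\|)$. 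AM-GM followed by monotonicity of $\log$ then yields $\log q_j + \log q_{j'} \leq 2\log\sigma(L\|g^j - g^{j'}\|)$.

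Summing the last display over all ordered pairs $(j, j')$ with $s_j \neq s_{j'}$ and noting that each $\log q_j$ appears $2(n-K)$ times on the left gives $\sum_j \log q_j \leq \frac{1}{n-K}\sum_{(j,j'):\,s_j\neq s_{j'}}\log\sigma(L\|g^j - g^{j'}\|)$. Since each $\log\sigma \leq 0$ and $1/(n-K)\geq 1/n$, the right-hand side is itself at most $\frac{1}{n}\sum_{(j,j'):\,s_j\neq s_{j'}}\log\sigma(L\|g^j - g^{j'}\|)$; dividing by $n$ and combining with the Jensen step completes the argument. The main obstacle is the pairwise step: one must notice that coupling the probability constraint $\sum_i a_{ij} \leq 1$ with the Lipschitz bound in both orderings of $(j, j')$ produces a symmetric two-inequality system whose sum collapses into the sigmoid. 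The remaining work is Jensen, AM-GM, and pair counting.
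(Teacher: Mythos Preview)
Your argument is correct and, in fact, cleaner than the paper's. Both proofs start the same way: after substituting the clustered corruption kernel, the objective becomes $\frac{1}{nK}\sum_j\sum_{i:s_i=s_j}\log p(y^i\mid g^j)$, and one reduces to controlling $\frac{1}{n}\sum_j\log q_j-\log K$ with $q_j=\sum_{i:s_i=s_j}p(y^i\mid g^j)$. The paper reaches this by arguing that the optimal decoder equalizes the within-class probabilities; you reach the same point by Jensen, which is the same observation phrased analytically.

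The real difference is in how $\sum_j\log q_j$ is bounded. The paper introduces an auxiliary separation hypothesis (same-class encodings within $\delta$, cross-class at least $\zeta$), exhibits a concrete decoder with $p(y^i\mid g^j)=(1-\gamma)/K$ or $\gamma/(n-K)$ for $\gamma=\sigma(-L\zeta)$, and then evaluates $\log P_i+\log P_j$ on this choice before relaxing $\zeta\mapsto\|g^i-g^j\|$. Your route avoids both the separation hypothesis and any explicit decoder: you couple the normalization $\sum_i p(y^i\mid g^j)\le 1$ with the Lipschitz inequality in both directions to obtain the symmetric two-inequality system whose sum gives $q_j+q_{j'}\le 2\sigma(L\|g^j-g^{j'}\|)$, then pass to $\log q_j+\log q_{j'}$ by AM--GM. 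This is more elementary, applies to an arbitrary encoder without any a priori cluster geometry, and makes transparent why the bound holds uniformly over $\mathcal D_L$ rather than for one constructed decoder. The only place you pay is the final relaxation $1/(n-K)\to 1/n$ on a nonpositive sum, which is harmless and explicit; the paper's version silently drops the $s_i=s_j$ pairs at the same stage. Overall your pairwise Lipschitz-plus-normalization step is the sharper idea and yields a self-contained proof of the stated bound.
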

Note that $y^i$ here denotes the $i$-th protein sequence in the dataset. 
$\mathcal{Y}$ denotes the set of all protein sequences.
We provide the proof in appendix~\ref{theorem 3.2}.

\section{Experiments}
% In this section, we conduct extensive experiments to validate the effectiveness of our proposed method.

\subsection{Datasets}
We evaluate our method on two metalloproteins: $\beta$-lactamase binding zinc ion, and myoglobin binding heme.
We discuss the significance of both proteins in Appendix~\ref{reason_for_datasets}. 
To obtain the data, we first collect these two kinds of proteins from PDB and then extract chain A from these proteins. Only proteins capable of binding the corresponding metallofactors are retained.
Next we perform length filtering for both proteins, i.e., reserving $\beta$-lactamases longer than $200$ and myoglobins longer than $100$. 
Then we run MSAs using ClustalW2 method~\cite{anderson2011suitemsa}. Based on the results, we set positions whose alignment frequencies are higher than a given threshold $\lambda$ as the motif parts and others as flexible parts. 
Then we randomly split each dataset into training/validation/test sets with the ratio $8:1:1$.
The detailed data statistics are shown in Appendix Table~\ref{Tab: data_statistics}.

\subsection{Experimental Details}
\textbf{Training Details} We use two-layer EGNN~\cite{satorras2021n} with hidden size equal to $320$ as the backbone structure encoder and a finetuned ESM-2 \cite{lin2022language} to initialize the GSD.
The sequences are decoded using sampling strategy with top-$3$. 
More implementation details are provied in Appendix~\ref{appendix_implementation_details}.

\textbf{Baseline Models}
We compare the proposed \model against the following representative baselines:
(1) \textbf{Hallucination}~\citep{anishchenko2021novo} 
uses MCMC~\citep{andrieu2003introduction} incorporating a motif constraint into the acceptance score calculation.
(2) \textbf{Inpainting}~\citep{wang2022scaffolding} recovers both sequence and structure based on the given protein segments.
(3) \textbf{SMCDiff}~\citep{trippe2022diffusion}+\textbf{ProteinMPNN}~\citep{dauparas2022robust}: We first apply SMCDiff to design a protein structure based on given motifs and use ProteinMPNN to generate a sequence based on the given structure.

To better analyze the influence of different components in our model, we also conduct ablation tests:
(7) \textbf{\model-w/o-ctx}:The node feature of the backbone structure encoder is randomly initialized without any contextual residue information.
(9) \textbf{\model-w/o-geo} directly applies the residue embeddings from the finetuned ESM-2 instead of the functional geometry revised representations.
(9) \textbf{\model-ESM} directly applies the pretrained ESM-2 without any further training.

\textbf{Evaluation Metrics}
We use the following automatic metrics to evaluate the quality of the designed proteins:
(1) \textbf{AAR} assesses how similar the designed sequence is to the target sequence.
(2) \textbf{RMSD} evaluates how close our designed structure is to the target structure.
(3) \textbf{pLDDT}~\citep{jumper2021highly} provides an overall confidence score that a designed protein sequence can fold into a structure which is similar to natural proteins. We apply ESMFold~\citep{lin2023evolutionary} to calculate pLDDT due to its much higher efficiency than AlphaFold2~\citep{jumper2021highly}.
(4) \textbf{TM-score}~\citep{zhang2005tm} evaluates how similar structure predicted by the designed sequence is to the target structure. We use ESMFold to predict the structure of the designed sequence.
% We also provide the E-value rate~(EVR) to denote the proportion of the designed sequence whose E-value is lower than $1$e-$5$ among all the generated sequences in Appendix~\ref{appendix_cases}.

\begin{table*}[!t]
\small
\centering
\begin{tabular}{llccccc}
\midrule
& Models & AAR (\%,$\uparrow$) & RMSD (\AA,$\downarrow$) & pLDDT ($\uparrow$) & TM-score ($\uparrow$)\\
\midrule
\multirow{4}{*}{\rotatebox{90}{$\beta$-lactamase}}& Hallucination & $4.79$ & $--$ & $30.5511$ & $0.2918$ \\ 
&Inpainting & $16.73$ & $4.0599$ & $61.7679$ & $0.3790$  \\
&SMCDiff+PrteinMPNN & $19.94$ & $10.3960$ & $42.0375$ & $0.3458$  \\
&\cellcolor{myblue}\model & \cellcolor{myblue}$\textbf{\textbf{43.41}}$ & \cellcolor{myblue}$\textbf{2.9825}$ & \cellcolor{myblue}$\textbf{62.7349}$  &\cellcolor{myblue}$\textbf{0.4256}$  \\
\midrule
\multirow{4}{*}{\rotatebox{90}{myoglobin}} & Hallucination & $4.81$ & $--$ & $38.2817$ & $0.2754$ \\ 
&Inpainting & $39.59$ & $3.3751$ & $67.0813$ & $0.4391$\\
&SMCDiff+ProteinMPNN & $12.47$ & $8.0067$ & $34.5914$ & $0.2235$   \\
& \cellcolor{myblue}\model & \cellcolor{myblue}$\textbf{51.12}$ & \cellcolor{myblue}$\textbf{2.9891}$  & \cellcolor{myblue}$\textbf{77.3399}$ & \cellcolor{myblue}$\textbf{0.4656}$  \\
\bottomrule
\end{tabular}
\caption{Model performance on two metalloprotein datasets. \model achieves the best performance on both datasets.}
\label{Tab: main_results}
\end{table*}

\subsection{Main Results}
Table~\ref{Tab: main_results} reports the performance of all models.

\textbf{\model can design more realistic proteins.}
% plddt, tm-score, rmsd 
Table~\ref{Tab: main_results} shows our proposed \model achieves highest pLDDT and TM-score among all competitors on both datasets, demonstrating our model can generate more realistic proteins with relatively high confidence.
Our interpretation is that our \model leverages the inherent correlation between protein sequence and backbone structure, which can not only constrain each other to avoid misfolding but also benefit from each other to design a biologically functional protein with higher potential.

\begin{table*}[!t]
\small
\centering
\begin{tabular}{llcccc}
\midrule
& Models & AAR (\%,$\uparrow$) & RMSD (\AA,$\downarrow$) & pLDDT ($\uparrow$) & TM-score ($\uparrow$)   \\
\midrule
 & \cellcolor{myblue}\model & \cellcolor{myblue}$43.41$ & \cellcolor{myblue}$\textbf{2.9825}$ & \cellcolor{myblue}$\textbf{62.7349}$  &\cellcolor{myblue}$\textbf{0.4256}$\\
$\beta$-lactamase&~--  w/o-ctx & $\textbf{45.18}$ & $3.8759$ & $57.2319$ & $0.4125$  \\
&~--  w/o-geo & $42.07$ & $3.5755$ & $57.8732$ & $0.4109$\\
&~--  ESM & $39.98$ & $3.7218$ & $55.9130$ & $0.4019$ \\
\bottomrule
 & \cellcolor{myblue}\model & \cellcolor{myblue}$\textbf{51.12}$ & \cellcolor{myblue}$\textbf{2.9891}$  & \cellcolor{myblue}$\textbf{77.3399}$ & \cellcolor{myblue}$\textbf{0.4656}$ \\
myoglobin&~--  w/o-ctx & $46.19$ & $3.7615$ & $68.7901$ & $0.4502$ \\
&~--  w/o-geo & $49.56$ & $3.3132$ & $63.4686$ & $0.4419$  \\
&~--  ESM & $42.13$ & $4.0289$ & $61.6971$ & $0.4209$\\
\bottomrule
\end{tabular}
\caption{Ablation study results: Either removing sequence contextual or geometric guidance would lead to performance degradation, highlighting how co-design can enhance superior protein design.}
\label{Tab: Ablation}
\end{table*}

% \textbf{\model is able to design more novel proteins.}
% Our \model obtains lowest AAI scores except Inpainting on both datasets. 
% However, Inpainting achieves very low TM-scores~(< $0.5$), indicating it might produce unreliable proteins though they have lowest amino acid overlapping with existing sequences.
% Instead, proteins designed by our model have relatively low amino acid overlapping with Uniprot and also highest TM-scores. It demonstrates that our \model can generate both novel and realistic proteins due to the reason that our sequence decoder is able to explore more diverse proteins which are simultaneously constrained by the geometric information as we have proved in Theorem~\ref{theorem_1}.

\textbf{\model has the smallest recovery error.}
As shown in Table~\ref{Tab: main_results}, our model outperforms all competitors on  AAR and RMSD, exhibiting its best performance on both sequence and structure reconstruction.
It is because our \model takes advantage of the relationship between protein sequence and backbone structure. Co-designing these two parts leads to more accurate sequence and structure through their interaction with each other during the design process.

\begin{figure}
\begin{minipage}[t]{0.33\linewidth}
\centering
\includegraphics[width=3.5cm]{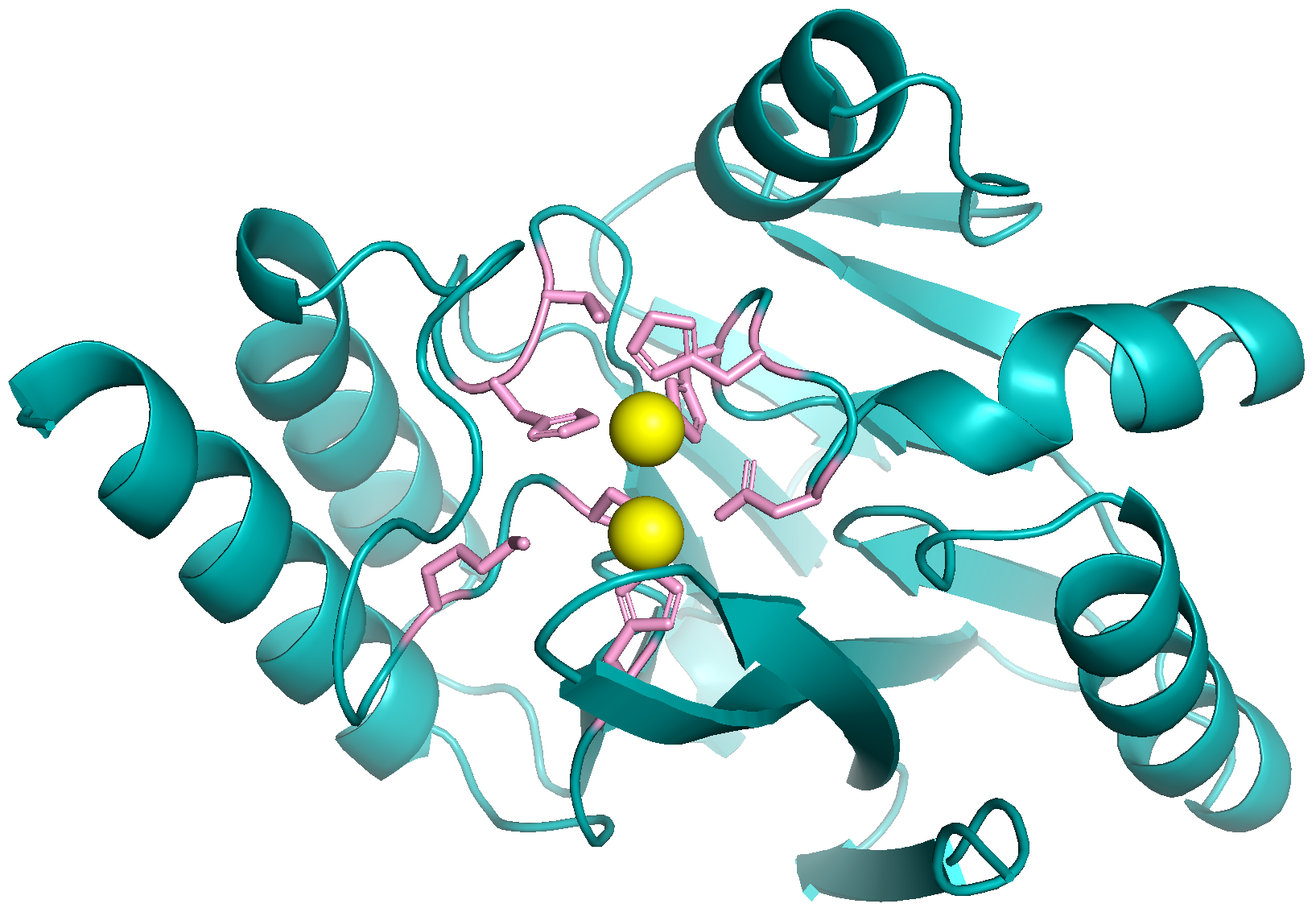}
\centerline{(a) case 1}
\end{minipage}%
\begin{minipage}[t]{0.33\linewidth}
\centering
\includegraphics[width=3.5cm]{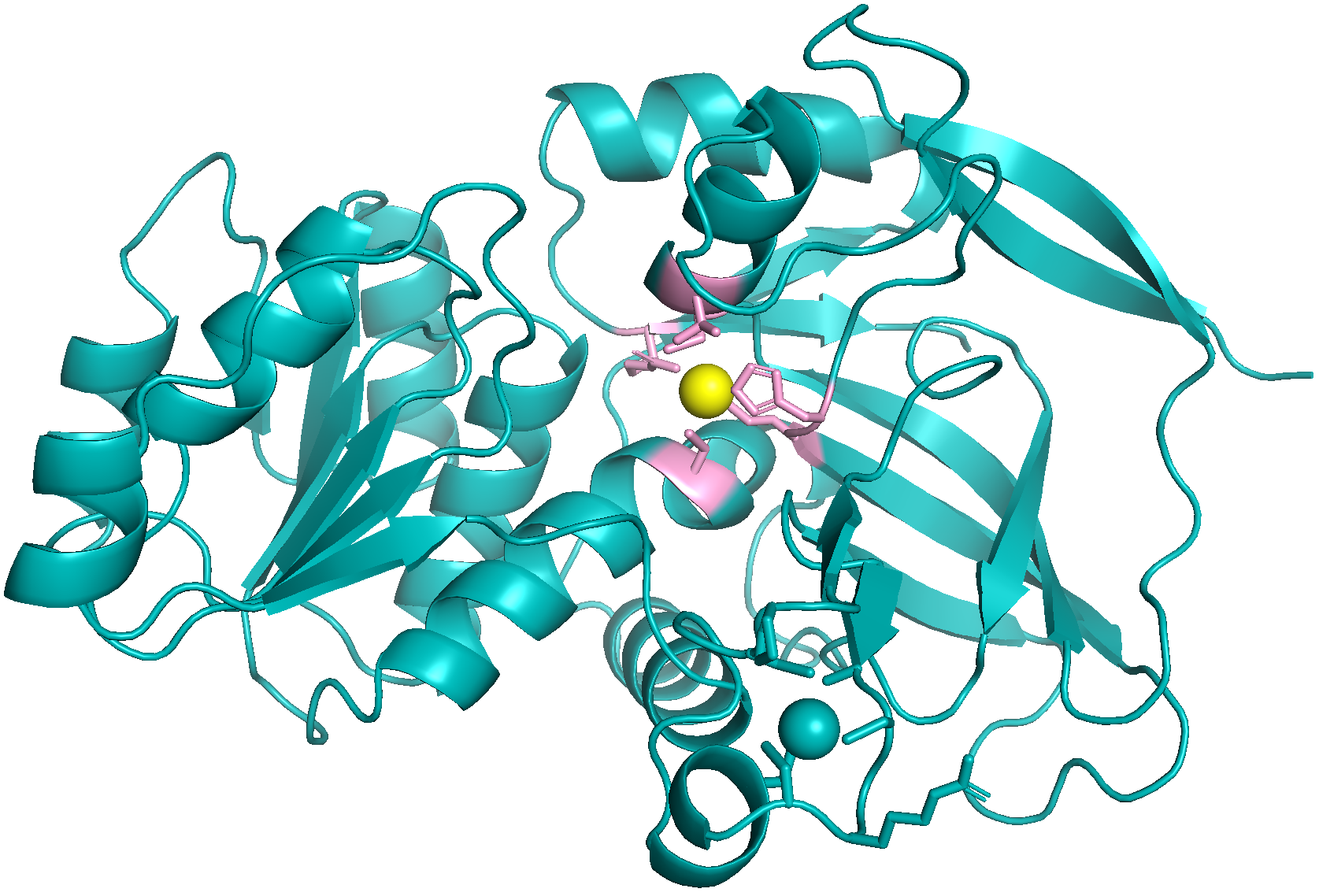}
\centerline{(b) case 2}
\end{minipage}%
\begin{minipage}[t]{0.33\linewidth}
\centering
\includegraphics[width=3.5cm]{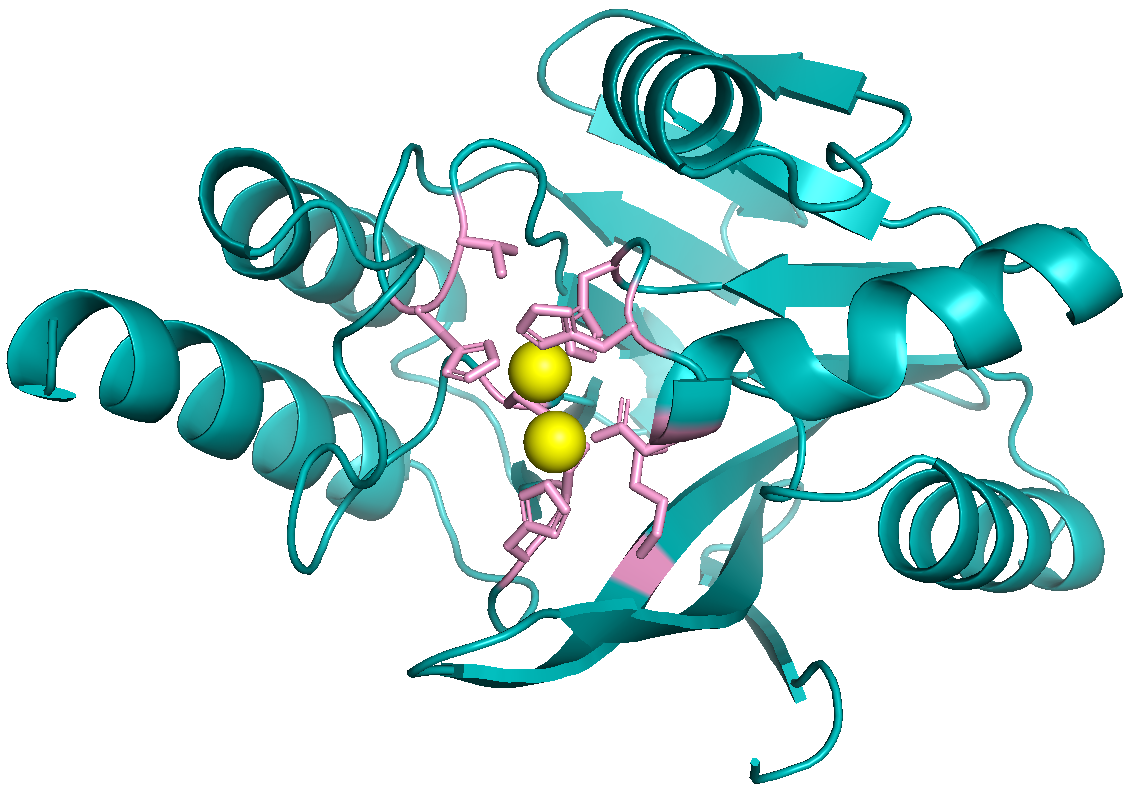}
\centerline{(c) case 3}
\vspace{-0.5em}
\end{minipage}
	\caption{Designed proteins of $\beta$-lactamase which belong to different subclasses (a) B1, (b) B2, (c) B3 metal-dependent $\beta$-lactamases.} 
 \label{figure_beta}
\end{figure}

\begin{figure}
\begin{minipage}[t]{0.33\linewidth}
\centering
\includegraphics[width=3.2cm]{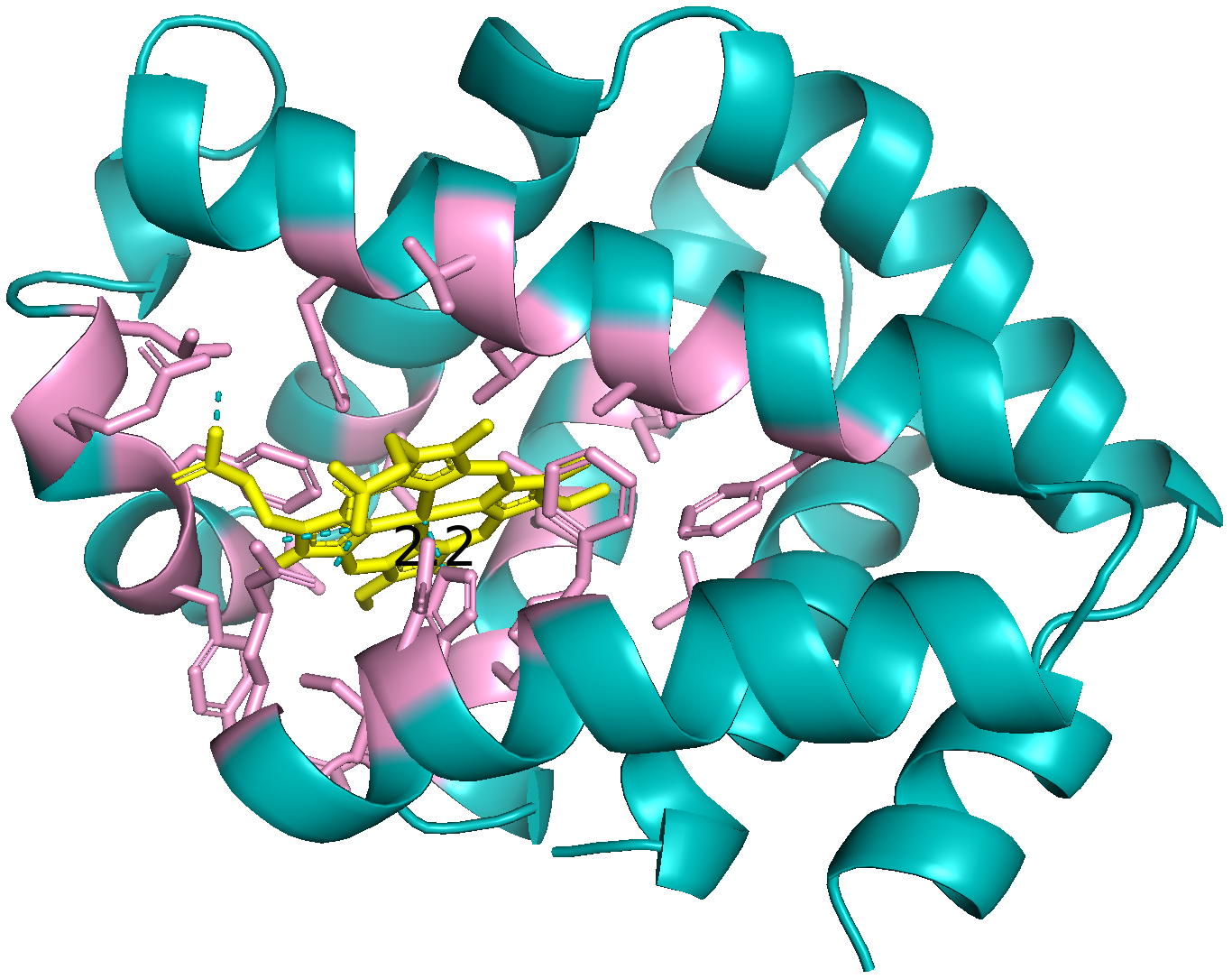}
\centerline{(a) case 1}
\end{minipage}%
\begin{minipage}[t]{0.33\linewidth}
\centering
\includegraphics[width=3.3cm]{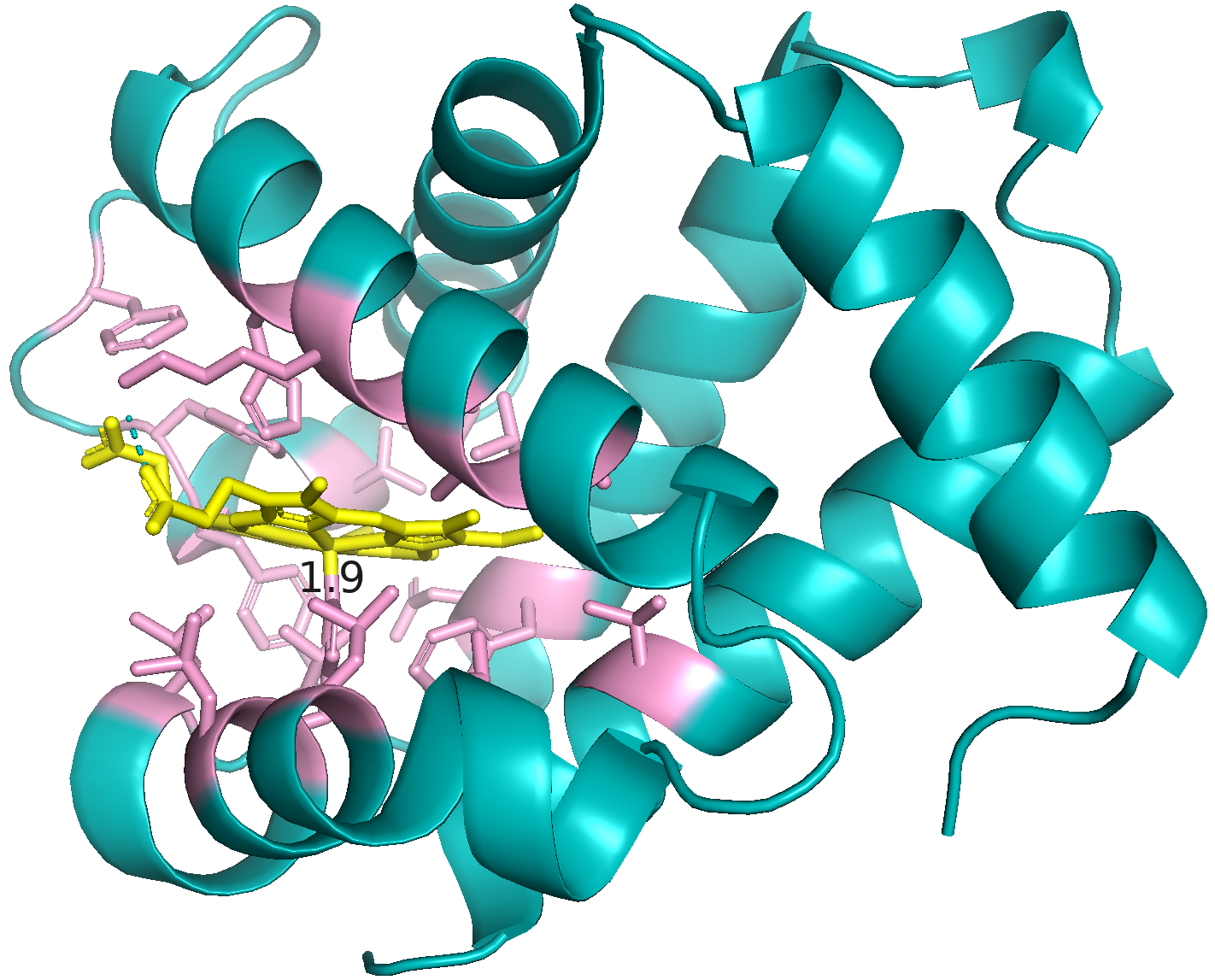}
\centerline{(b) case 2}
\end{minipage}%
\begin{minipage}[t]{0.33\linewidth}
\centering
\includegraphics[width=3.3cm]{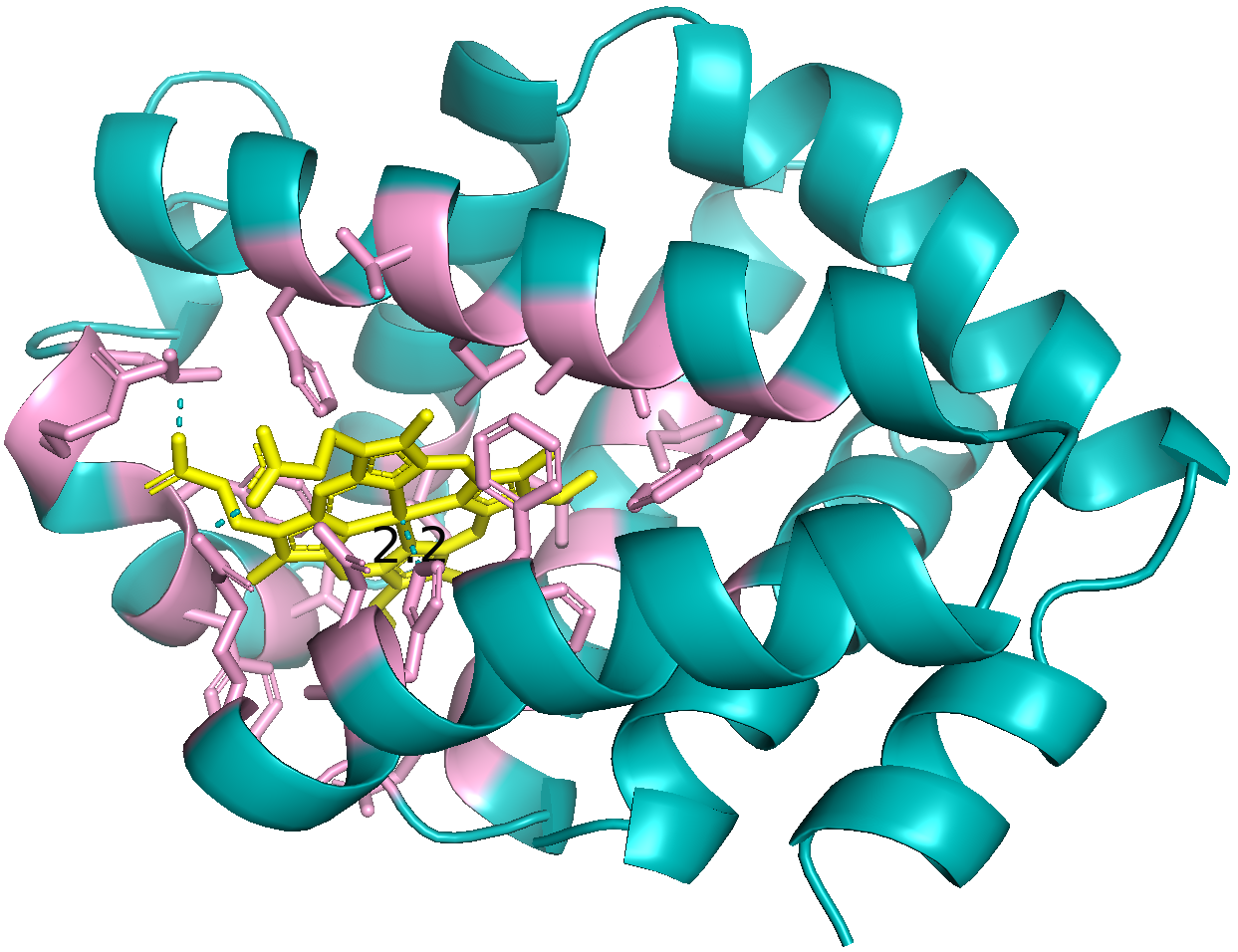}
\centerline{(c) case 3}
\end{minipage}
 \caption{Designed proteins of myoglobin which bind heme through hydrogen bonds.} 
 \label{figure_myoglobin}
\end{figure}

% \begin{figure}
% \begin{minipage}[t]{0.33\linewidth}
% \centering
% \includegraphics[width=4.5cm]{plddt_beta.pdf}
% \centerline{(a) pLDDT of beta-lactamase}
% \end{minipage}%
% \begin{minipage}[t]{0.33\linewidth}
% \centering
% \includegraphics[width=4.5cm]{plddt_myoglobin.pdf}
% \centerline{(b) pLDDT of myoglobin}
% \end{minipage}%
% \begin{minipage}[t]{0.33\linewidth}
% \centering
% \includegraphics[width=4.15cm]{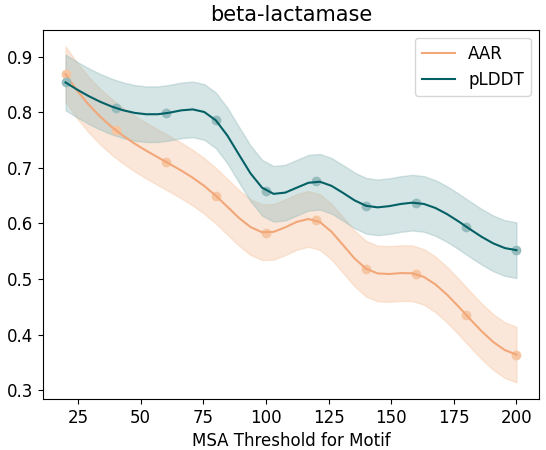}
% \centerline{(b) Effect of motif threshold}
% \end{minipage}
% \vspace{-0.5em}
% 	\caption{Visualization of (a) and (b) pLDDT difference between ESMFold and AlphaFold2, (c) AAR and pLDDT under different motif threshold. Shadow denotes results for sampling five times.} 
%  \label{figure_analysis}
% \end{figure}

\subsection{Ablation Study}
Table~\ref{Tab: Ablation} shows the results of ablation study.
Without finetuning the sequence decoder, the performance drops the most on all metrics~(\model-ESM). It validates that related information can be better  explored from a pretrained protein language model by tuning the parameter to the specific protein family.
Without the contextual sequence representation initialization, \model-w/o-ctx achieves worse RMSD scores than \model-w/o-geo. 
Instead, after removing the geometric information guidance, \model-w/o-geo obtains lower amino acid recovery rate than \model-w/o-ctx.
Once again, these phenomena validate the mutual benefits between protein sequence and structure, highlighting how co-design can enhance the generation of superior protein candidates.

\section{Analysis}

To intuitively know how well our model can perform, we visualize the designed proteins.  
Specifically, we first randomly select $3$ cases from the top-100 candidates according to pLDDT and then use AlphaFold2 to predict the protein structure, which will be subsequently used to predict the relevant ligand by AlphaFill~\cite{hekkelman2023alphafill}.
Finally we apply PyMOL~\cite{pymol} to visualize the final results.
From Figure~\ref{figure_beta} and~\ref{figure_myoglobin}, we can see that all the designed $\beta$-lactamases and myoglobins have active site environments highly similar to natural proteins, and can respectively bind zinc ion and heme (yellow parts), demonstrating our model can design functional proteins.
Furthermore, all these cases are not included by PDB, and some of them even have lower identity overlapping with UniProt sequences (e.g., 61.0\% AAR of Figure~\ref{figure_beta} (b)), verifying our \model is able to design novel proteins. 
% For example, Figure~\ref{figure_beta} (b) only has 57.8\% identity overlapping rate with the most similar sequence in UniProt.
Additionally, the three $\beta$-lactamases belong to three different subclasses of metal-dependent $\beta$-lactamases featuring complementary Zn coordination chemistries, validating that our model can design diverse proteins.
% Overall, these case studies show that our proposed \model has the capability to design novel and diverse proteins with desired functions.
We provide more cases in Appendix~\ref{appendix_cases} and all the cases are supplied in supplementary material.

\section{Conclusion}
This paper proposes \model, a method to co-design protein sequence and backbone structure.
The proposed model leverages the inherent correlation between protein sequence and backbone structure, and is powered by an equivariant $3$D backbone structure encoder and a geometry-guided sequence decoder.
Experimental results show that our proposed \model outperforms several strong baselines on most metrics. 
Additionally, our model discovers novel $\beta$-lactamases and myoglobins which are not recorded by PDB and UniProt.
One limitation of this work is, although our model has demonstrated promising results, the designed proteins have not undergone wet-lab testing. Consequently, we cannot provide complete assurance regarding the ability of the designed metalloproteins to bind the corresponding metallocofactor and perform their desired functions.
Future work will involve the wet-lab testing to verify the metal binding ability and biological function of our designed metalloproteins.
% Secondly, we aim to enhance our current methodology by incorporating motif learning as part of our model, enabling de novo protein design without constraints.
% context-free protein design and ultimately facilitating real de novo protein design.

\bibliography{neurips}
\bibliographystyle{plainnat}

%%%%%%%%%%%%%%%%%%%%%%%%%%%%%%%%%%%%%%%%%%%%%%%%%%%%%%%%%%%%
\newpage
\appendix
\section*{Appendix}
\section{Proof of Theorem 3.2}
\label{theorem 3.2}
\begin{theorem}
Suppose $\{y^1, y^2, ..., y^n\}$ are belonging to n/K different proteins of equal size K instances, with $s_i$ denoting the specific protein of $y^i$. Suppose $p(y^i|\Tilde{y}^j)=1/K$ if $s_i=s_j$ and 0 otherwise. With a deterministic encoder mapping E from $\{y^1, y^2, ..., y^n\}$ to $\{g^1, g^2, ..., g^n\}$, the denoising objective $\max_{D\in \mathcal{D}_L} \frac{1}{n}\sum_{i=1}^n \sum_{j=1}^n p(y^j|\Tilde{y}^i)\log p(y^i|E(y^j))$ has an upper bound: $\frac{1}{n^2} \sum_{i,j:s_i\ne s_j} \log \sigma(L||E(y^i)-E(y^j)||)-\log K$.
\end{theorem}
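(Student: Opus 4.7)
The plan is to extract the $-\log K$ term via Jensen's inequality and the probability simplex, then convert the residual into the $\log \sigma$ form using the Lipschitz hypothesis. Writing $a_{ij} := p(y^i \mid E(y^j))$, $g_k := E(y^k)$, $c_k := a_{kk}$, $P_j := \{i : s_i = s_j\}$, and $N_j := \{l : s_l \neq s_j\}$, the assumption $p(y^j \mid \tilde{y}^i) = \frac{1}{K}\,\mathbf{1}[s_i = s_j]$ collapses the objective to $J := \frac{1}{nK}\sum_{(i,j):\,s_i = s_j} \log a_{ij}$.

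For the first step, I would fix $j$ and use normalization $\sum_i a_{ij} \leq 1$, which gives $\sum_{i \in P_j} a_{ij} \leq 1 - \sum_{l \in N_j} a_{lj}$. Concavity of $\log$ then yields
\[ \frac{1}{K}\sum_{i \in P_j} \log a_{ij} \leq \log\!\Bigl(\frac{1}{K}\sum_{i \in P_j} a_{ij}\Bigr) \leq -\log K + \log\!\Bigl(1 - \sum_{l \in N_j} a_{lj}\Bigr). \]
Averaging over $j$ produces $J \leq -\log K + \frac{1}{n}\sum_j \log\!\bigl(1 - \sum_{l \in N_j} a_{lj}\bigr)$, reducing the target to showing the residual is at most $\frac{1}{n^2}\sum_{(i,j):\,s_i \neq s_j} \log \sigma(L\,||g_i - g_j||)$.

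For the second step, I would invoke the Lipschitz hypothesis to obtain sigmoid envelopes. The pointwise bound $a_{lj} \geq c_l\, e^{-L\,||g_l - g_j||}$ coming from the Lipschitz condition, combined with the normalization $c_j + a_{lj} \leq 1$ at $g_j$ and its mirror at $g_l$, yields the symmetric pair inequalities $c_l + c_j e^{-L\,||g_l - g_j||} \leq 1$ and $c_j + c_l e^{-L\,||g_l - g_j||} \leq 1$. Adding these and applying AM-GM delivers the envelope $\sqrt{c_l c_j} \leq \sigma(L\,||g_l - g_j||)$, equivalently $\log c_l + \log c_j \leq 2\log \sigma(L\,||g_l - g_j||)$. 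I would then decompose the residual via the Weierstrass product inequality $1 - \sum_l x_l \leq \prod_l (1 - x_l)$, turning $\log\!\bigl(1 - \sum_{l\in N_j} a_{lj}\bigr)$ into $\sum_{l \in N_j} \log(1 - a_{lj})$, and convert each term into a $\log\sigma$ contribution using the Lipschitz-derived pair bounds above.

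The main obstacle lies in this final conversion: the estimate $\log(1-a_{lj}) \leq \log \sigma(L\,||g_l-g_j||)$ sharpens to the sigmoid envelope only when $c_l$ saturates its $\sigma$ bound, which a generic encoder need not satisfy. My plan for closing this gap is an amortization argument: summing the pair bound $\log c_l + \log c_j \leq 2\log \sigma(L\,||g_l-g_j||)$ over all ordered negative pairs makes each $\log c_k$ appear with multiplicity $2(n-K)$, yielding global control on $\sum_k \log c_k$ that interacts with the $\frac{1}{n}$ prefactor in front of the residual to recover the target $\frac{1}{n^2}$ normalization. Carrying out this counting precisely, and reconciling the per-$j$ Weierstrass decomposition with the symmetric pair inequalities so that the slack disappears on average, is where I anticipate the principal technical difficulty.
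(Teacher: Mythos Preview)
Your first two ingredients are correct and are exactly the backbone of the paper's argument: the Jensen step that reduces the objective to $-\log K + \tfrac{1}{n}\sum_j \log P_j$ with $P_j:=\sum_{i:s_i=s_j}a_{ij}$, and the pair inequality $\log P_i+\log P_j\le 2\log\sigma(L\|g_i-g_j\|)$ obtained from Lipschitz plus normalisation. (Your derivation of the pair bound is actually cleaner than the paper's; note too that it holds for the cluster masses $P_j$ directly, not only for the diagonal $c_j=a_{jj}$: apply Lipschitz to each $i'$ with $s_{i'}=s_j$, sum to get $\sum_{i':s_{i'}=s_j}a_{i'l}\ge P_j e^{-L\|g_l-g_j\|}$, and then use normalisation at $g_l$.)

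Where you diverge is the Weierstrass step $\log(1-\sum_l a_{lj})\le \sum_l\log(1-a_{lj})$. This is the source of the gap you yourself flag: the per-term bound $\log(1-a_{lj})\le\log\sigma(L\|g_l-g_j\|)$ simply does not follow from Lipschitz, and no amount of averaging will repair it, because Weierstrass has already thrown away the coupling between the $a_{lj}$'s that the pair bound exploits. The paper avoids this entirely by a one-line symmetrisation identity:
\[
\frac{1}{n}\sum_{j}\log P_j \;=\; \frac{1}{2n^2}\sum_{i}\sum_{j}\bigl(\log P_i+\log P_j\bigr),
\]
after which one applies your pair bound to every $(i,j)$ with $s_i\neq s_j$ and simply drops the $s_i=s_j$ terms (each of which is $\le 0$ since $P_k\le 1$). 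This immediately produces the $\tfrac{1}{n^2}\sum_{s_i\neq s_j}\log\sigma(L\|g_i-g_j\|)$ target with no residual slack to amortise. Your ``amortisation/counting'' intuition is exactly this symmetrisation written out by hand; you are one algebraic identity away from a complete proof, and the Weierstrass route should be discarded. (The paper additionally introduces a separation parameter $\zeta$ and a specific decoder choice, but these are unnecessary once the pair bound and the symmetrisation are in place; in that respect your approach is more direct.)
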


%\begin{proof}
\begin{proof}\renewcommand{\qedsymbol}{}
For convinience, we denote $g^j=E(y^j)$. We consider what is the optimal decoder probability assignment $p(y^i|g^j)$ under the Lipschitz constraint. Suppose $g^i, g^j$ satisfy that with some $0<\delta<\zeta$: $||g^i-g^j||<\delta$ if $s_i=s_j$ and $||g^i-g^j||>\zeta$ otherwise. 
To lower bound the training objective, we can choose:
\begin{equation}
\small 
p(y^i|g^j) = \left\{
\begin{aligned}
\frac{1-\gamma}{K} &,  & {s_i=s_j} \\
\frac{\gamma}{n-K} &, & {\text{otherwise}}
\end{aligned}
\right.
\end{equation}
with $\gamma=\sigma(-L\zeta) \in (0, \frac{1}{2})$, where $\sigma$ denotes sigmoid function. Note that this choice can ensure $\sum_{i\in [n]}p(y_i|g^j)=1$ for each $j\in [n]$, and this also does not violate Lipschits condition.

The objective is to find optimal decoder D which:
\begin{equation}
\begin{split}
&\max_{D\in \mathcal{D}_L} \frac{1}{n}\sum_{i=1}^n \sum_{j=1}^n p(y^j|\Tilde{y}^i)\log p(y^i|g^j) \\
=&\max_{D\in \mathcal{D}_L} \frac{1}{nK} \sum_j\sum_{i: s_i=s_j} \log p(y_i|g^j) 
\end{split}
\end{equation}
Now let us define $P_j=\sum_{i:s_i=s_j} p(y^i|g^j)=K\cdot p(y^j|g^j)$, the above optimizing objective becomes:
\begin{equation}
\small 
\begin{split}
&\max_{D\in \mathcal{D}_L} \frac{1}{nK} \sum_j\sum_{i: s_i=s_j} \log p(y_i|g^j)  \\
=&\max_{D\in \mathcal{D}_L} \frac{1}{n} \sum_j \log p(y_j|g^j) \\ 
=&\max_{D\in \mathcal{D}_L} \frac{1}{n} \sum_j \log P_j - \log K \\ 
=&\max_{D\in \mathcal{D}_L} \frac{1}{2n^2} \sum_i\sum_j (\log P_i + \log P_j) - \log K \\ 
\leq&\frac{1}{2n^2} \sum_i\sum_j \max_{D\in \mathcal{D}_L}(\log P_i + \log P_j) - \log K
\end{split}
\end{equation}
For each term $\max_{D\in \mathcal{D}_L}\log P_i + \log P_j$:
\begin{equation}
\small 
\begin{split}
\log P_i + \log P_j &= 2 \log K \cdot \frac{1-\gamma}{K} \\ 
&=2\log (1-\gamma) \\ 
&=2\log \frac{\exp(L\zeta)}{1+\exp(L\zeta)} \\ 
&=2\log \sigma(-L\zeta) \\ 
&\leq 2 \log \sigma(-L||g^i-g^j||)
\end{split}
\end{equation}
Overall, we have:
\begin{equation}
\small
\begin{split}
&\max_{D\in \mathcal{D}_L} \frac{1}{nK} \sum_j\sum_{i: s_i=s_j} \log p(y_i|g^j)  \\
\leq & \frac{1}{n^2} \sum_i\sum_{j:s_i\ne s_j}  \log \sigma(-L||g^i-g^j||) - \log K
\end{split}
\end{equation}
\end{proof}

% \begin{table}[!t]
% \small
% \begin{center}
% \begin{tabular}{lcc}
% \midrule
% Models & Beta-Lactamase & Myoglobin \\
% \midrule
% Structured Transformer &  & \\ 
% ProteinMPNN & $93.16$ & $98.49$ \\
% GVP& & \\
% \cdashline{1-3}[1pt/1pt]
%  Hallucination & $0.00$ & $0.00$\\
% Inpainting & $95.18$ & $80.00$ \\
%  SMCDiff & $--$ & $--$ \\
% \cdashline{1-3}[1pt/1pt]
% \cellcolor{myblue}\model &\cellcolor{myblue}$\textbf{94.51}$ & \cellcolor{myblue}91.52 \\
% ~--  RandGNN & $100.00$ & $86.77$ \\
% ~--  w/o-Geo & $97.48$ & $89.47$\\
% ~--  w/o-tune & $89.17$ & $85.97$\\
% \bottomrule
% \end{tabular}
% \end{center}
% \caption{EVR (\%,$\uparrow$) of all models. SMCDiff~\cite{trippe2022diffusion} focuses on scaffolding-motif problem without considering sequence, so we do not report its EVR.}
% \label{Tab: EVR}
% \end{table}

\section{Additional Experimental Details}

\subsection{Significance of Metalloproteins Studied Herein}
\label{reason_for_datasets}
Metalloproteins comprise almost 50\% of all the naturally occuring proteins. 
The Zn-dependent $\beta$-lactamases and Fe-dependent myoglobins studied herein represent biologically signicant metalloprotein examples.
In particular, $\beta$-lactamases are enzymes produced by microorganisms to break down $\beta$-lactam antibiotics, conferring antibiotic resistance~\cite{gupta2008metallo}. Thus, the study and design of $\beta$-lactamases hold relevance to public health and play a critical role in the development of new antibiotics.
On the other hand, myoglobin is a heme-containing protein involved in oxygen storage and transport in muscle tissue~\cite{springer1994mechanisms}, highlighting their biological significance.

% Myoglobin is the first protein to have its structure determined so it is where the science of protein structure really begins~\cite{ordway2004myoglobin}. Myoglobin is not necessarily more complex than other proteins, as all proteins have unique structures and functions that also make them interesting targets for study and design.
% For beta-lactmases, they are more complex than many other enzymes due to their ability to break down a wide range of beta-lactam antibiotics with different chemical structures.
% The complexity of beta-lactamases has made their design a significant challenge, but also a promising avenue for developing new therapeutics and combating antibiotic resistance.

Overall, these two proteins both have significant research values.
Our experimental results show our proposed \model has the capability to design novel proteins with desired functions for both general protein and enzyme, exhibiting its superior generalization.

\subsection{Protein Data Statistics}
Detailed data statistics for $\beta$-lactamase and myoglobin are reported in Table~\ref{Tab: data_statistics}.
We will release the two created metalloprotein datasets in the near future.

\begin{table}[!t]
\small
\begin{center}
\begin{tabular}{lccc}
\midrule
Protein  & PDB & Metal Binding & Length Filtering  \\
\midrule
$\beta$-lactamase & $171,484$ & $7,802$ & $5,427$\\
myoglobin & $14,573$ & $3,381$ & $3,381$ \\
\bottomrule
\end{tabular}
\end{center}
\caption{Detailed data statistics of the two metalloproteins.}
\label{Tab: data_statistics}
\end{table}

\subsection{More Implementation Details}
\label{appendix_implementation_details}
The mini-batch size and learning rate are set to $4$ sequences and $1$e-$7$ respectively.
The model is trained for $10$ epochs with $1$ NVIDIA RTX A$6000$ GPU card. 
% We apply Adam~\cite{kingma2014adam} as the optimizer.
We apply Adam~\cite{kingma2014adam} as the optimizer with a linear warm-up over the first $4,000$ steps and linear decay for later steps.
We tune the hyperparameters $\alpha$ and $\beta$ both from $0.1$ to $1.0$ with step size 0.1 and find $\alpha=0.1$ and $\beta=1.0$ performs best on the validation set for $\beta$-lactamase.
For myoglobin, we find the gradient tends to vanish when $\alpha$ is relatively large, and thus we tune it from $0.01$ to $0.1$ with step size $0.01$ and find $\alpha=0.01$ performs best on the corresponding validation set.

% \subsection{E-value Rate}
% To asses how close our designed sequences are to their homologs in Uniprot, we also report the E-value rate~(EVR), which represents the proportion of the designed sequence whose E-value is smaller than $1$e-$5$~\cite{yu2013construction} among all the designed sequences. The higher the EVR, the more designed sequences are possible to be real proteins.

\begin{figure}[htbp]
\centering
\subfigure[case 1]{
\begin{minipage}[t]{0.33\linewidth}
\centering
\includegraphics[width=3.5cm]{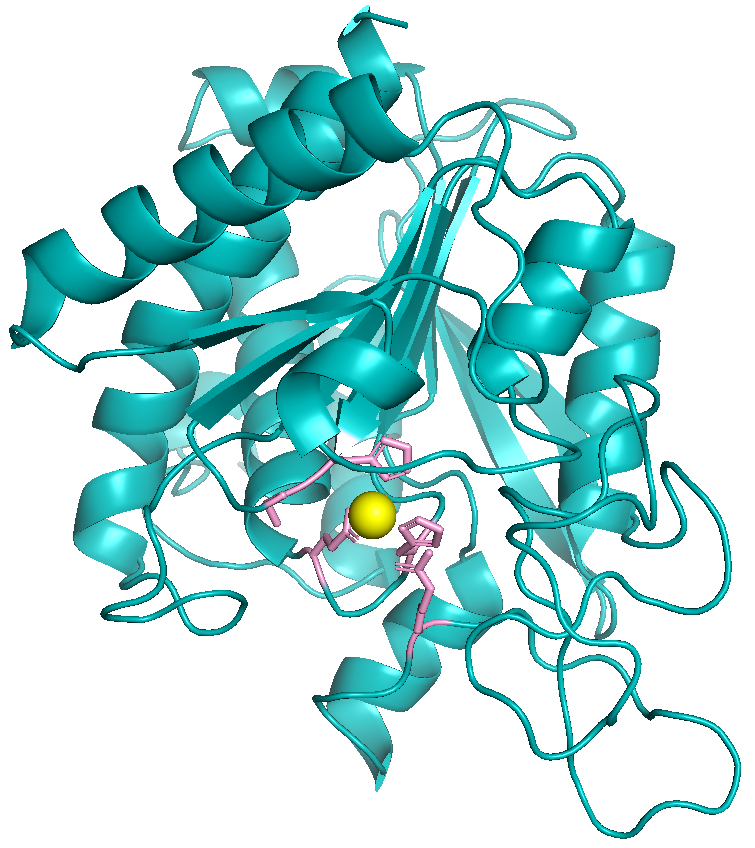}
%\caption{fig1}
\end{minipage}%
}%
\subfigure[case 2]{
\begin{minipage}[t]{0.33\linewidth}
\centering
\includegraphics[width=3.5cm]{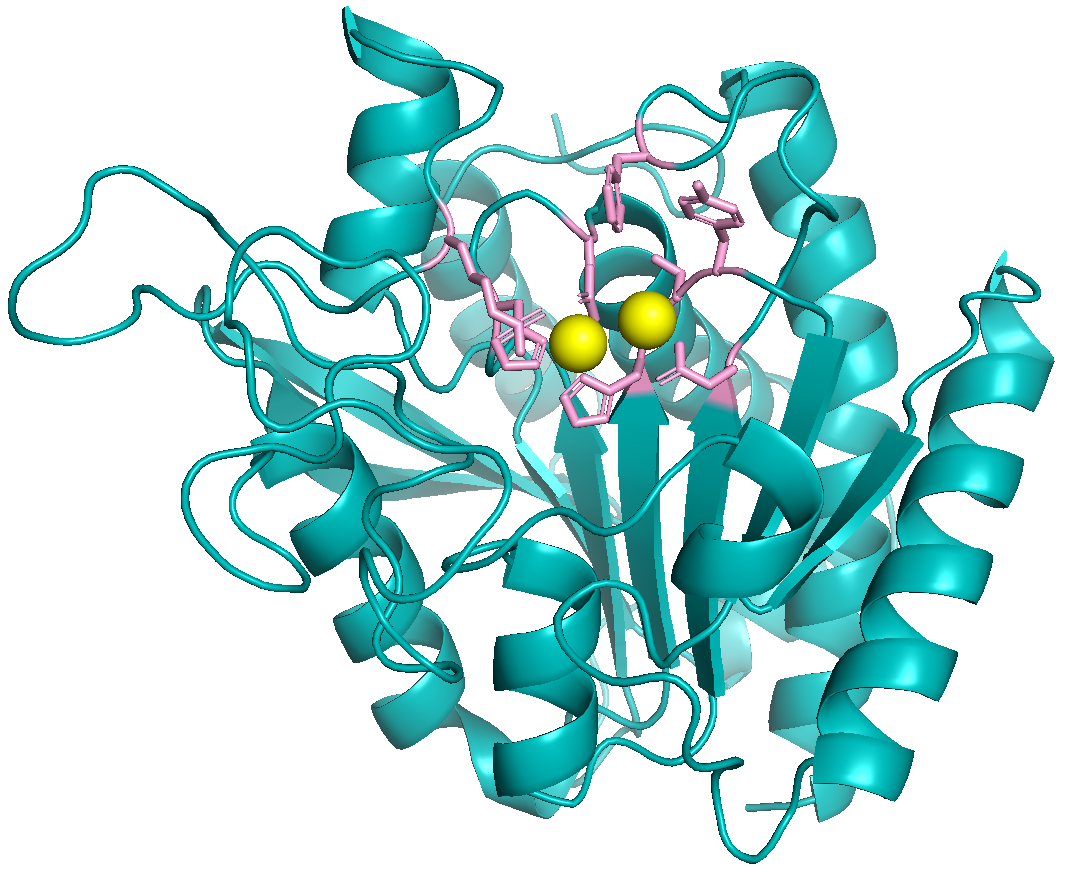}
%\caption{fig2}
\end{minipage}%
}%
\subfigure[case 3]{
\begin{minipage}[t]{0.33\linewidth}
\centering
\includegraphics[width=3.5cm]{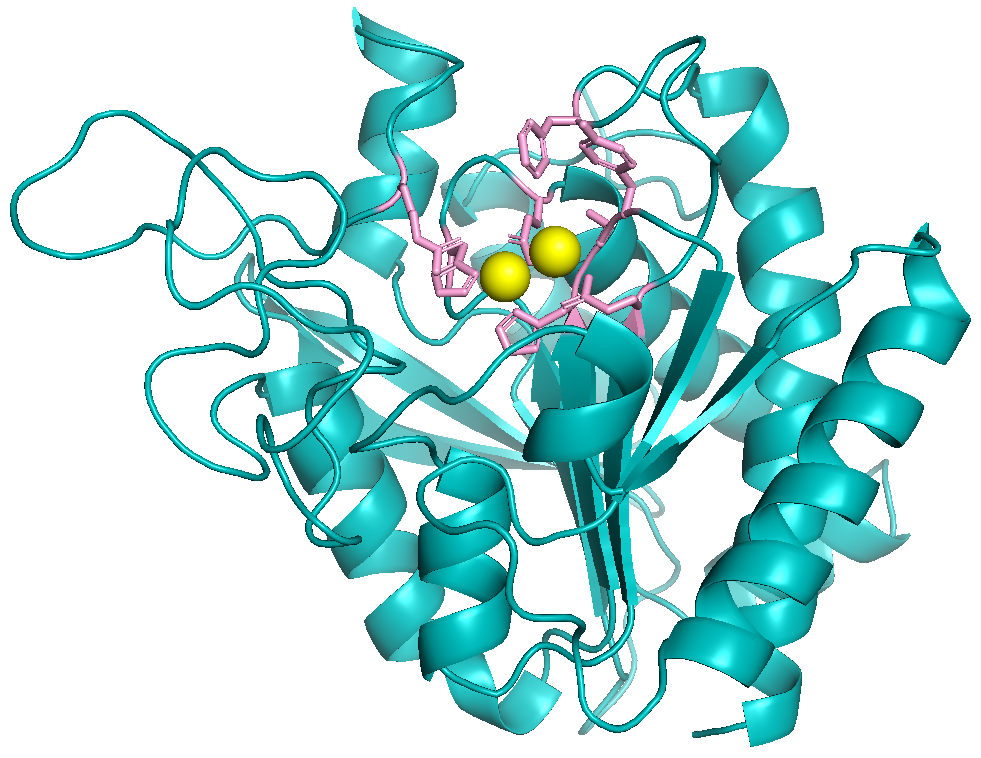}
%\caption{fig2}
\end{minipage}%
}%
\quad                
\subfigure[case 4]{
\begin{minipage}[t]{0.33\linewidth}
\centering
\includegraphics[width=3.5cm]{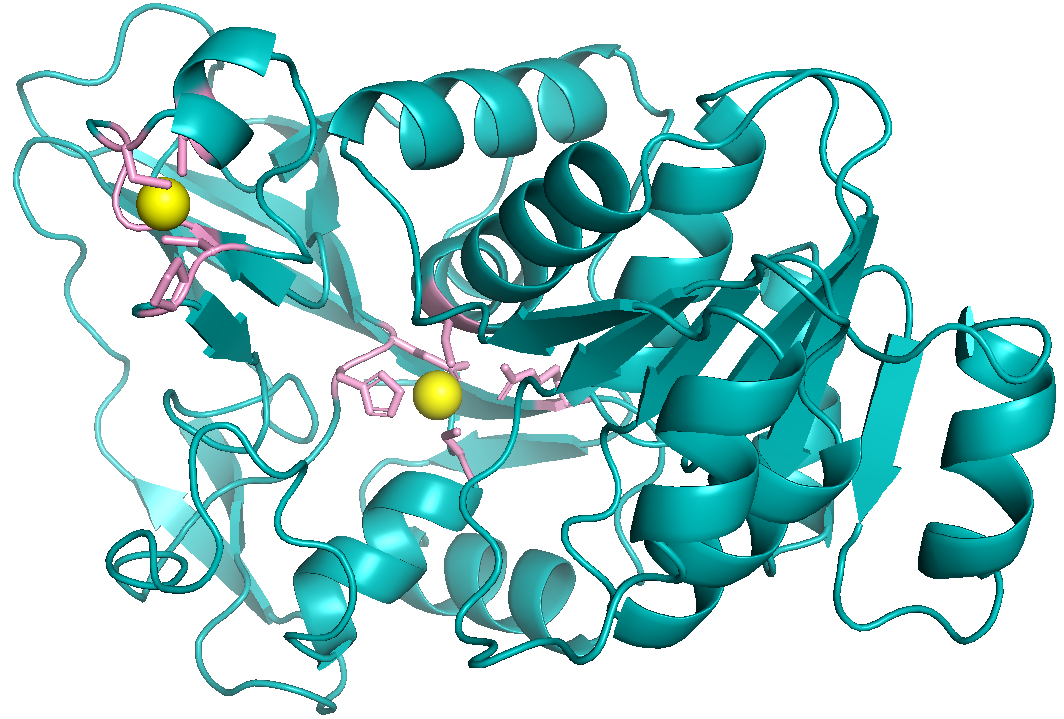}
%\caption{fig2}
\end{minipage}
}%
\subfigure[case 5]{
\begin{minipage}[t]{0.33\linewidth}
\centering
\includegraphics[width=3.5cm]{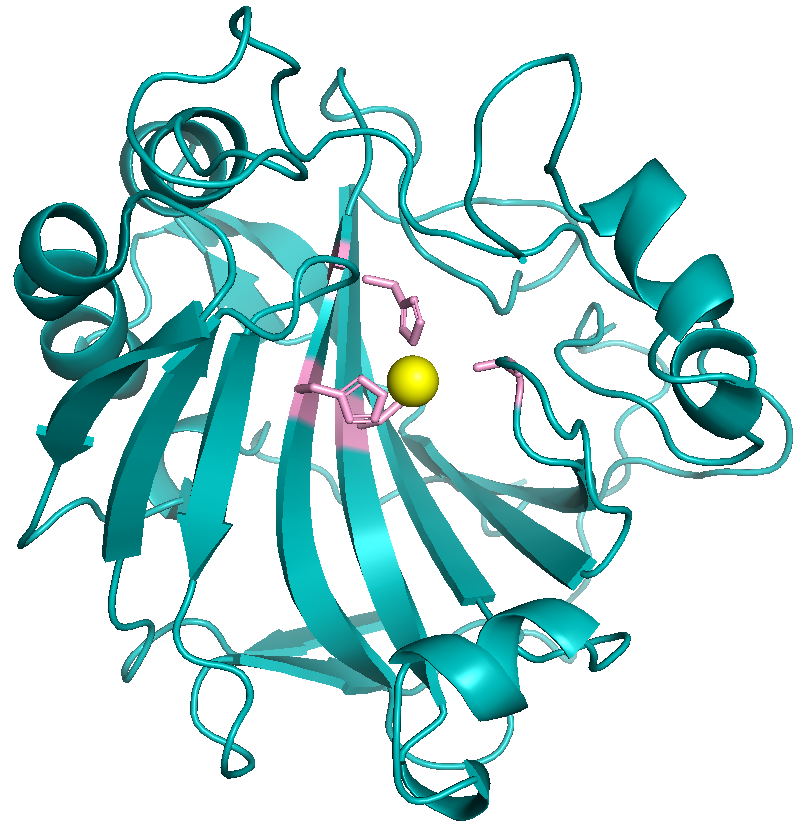}
%\caption{fig2}
\end{minipage}
}%
\subfigure[case 6]{
\begin{minipage}[t]{0.33\linewidth}
\centering
\includegraphics[width=3.5cm]{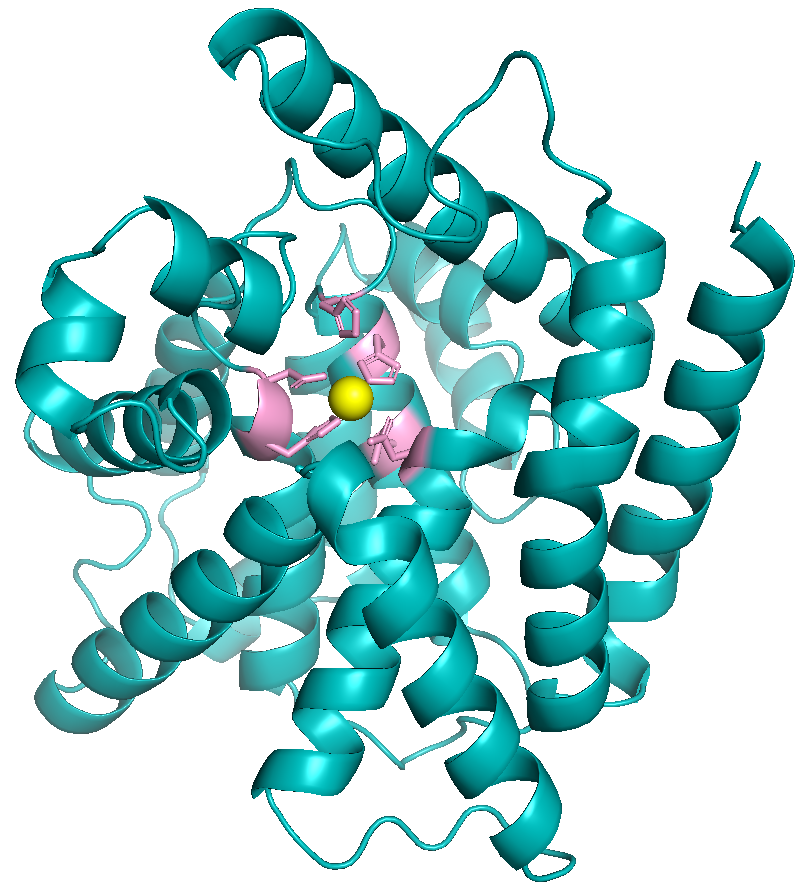}
%\caption{fig2}
\end{minipage}%
}%
\quad                 
\subfigure[case 7]{
\begin{minipage}[t]{0.33\linewidth}
\centering
\includegraphics[width=3.5cm]{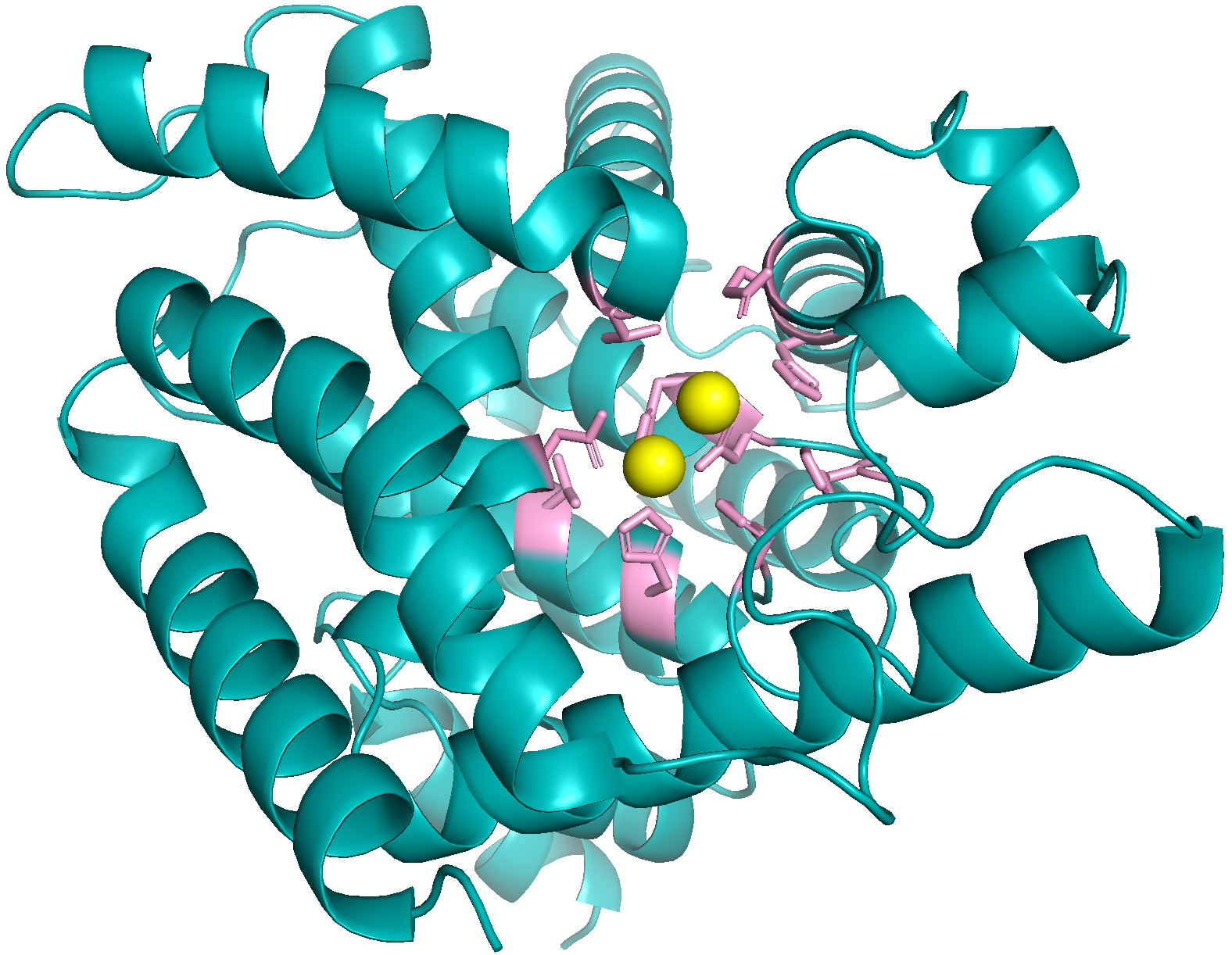}
%\caption{fig2}
\end{minipage}
}%
\subfigure[case 8]{
\begin{minipage}[t]{0.33\linewidth}
\centering
\includegraphics[width=3.5cm]{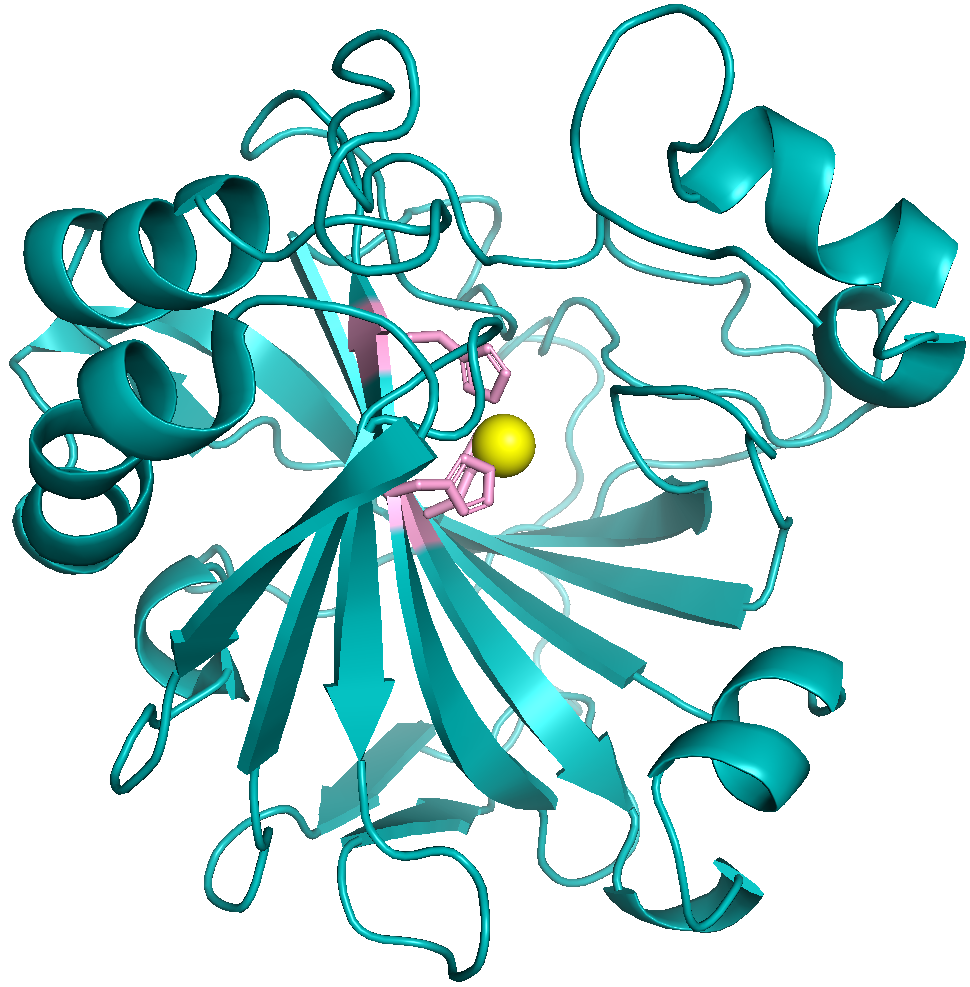}
%\caption{fig2}
\end{minipage}
}%
\subfigure[case 9]{
\begin{minipage}[t]{0.33\linewidth}
\centering
\includegraphics[width=3.5cm]{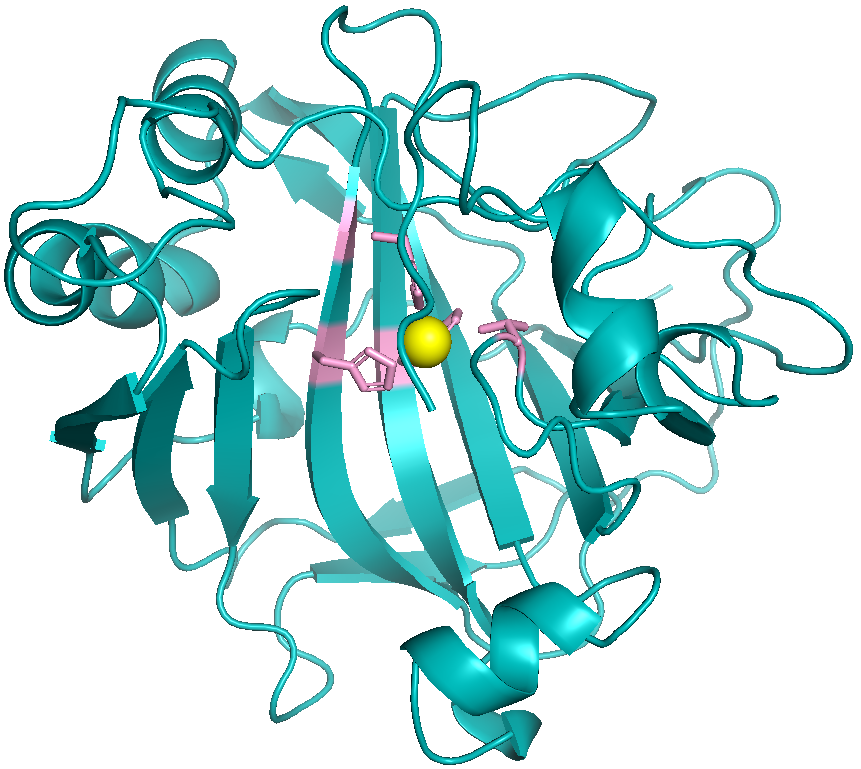}
%\caption{fig2}
\end{minipage}%
}%
\centering
\caption{More designed cases for $\beta$-lactamase.}
\label{Fig:appendix_case_beta}
\end{figure}

\begin{figure}[htbp]
\centering
\subfigure[case 1]{
\begin{minipage}[t]{0.33\linewidth}
\centering
\includegraphics[width=3.5cm]{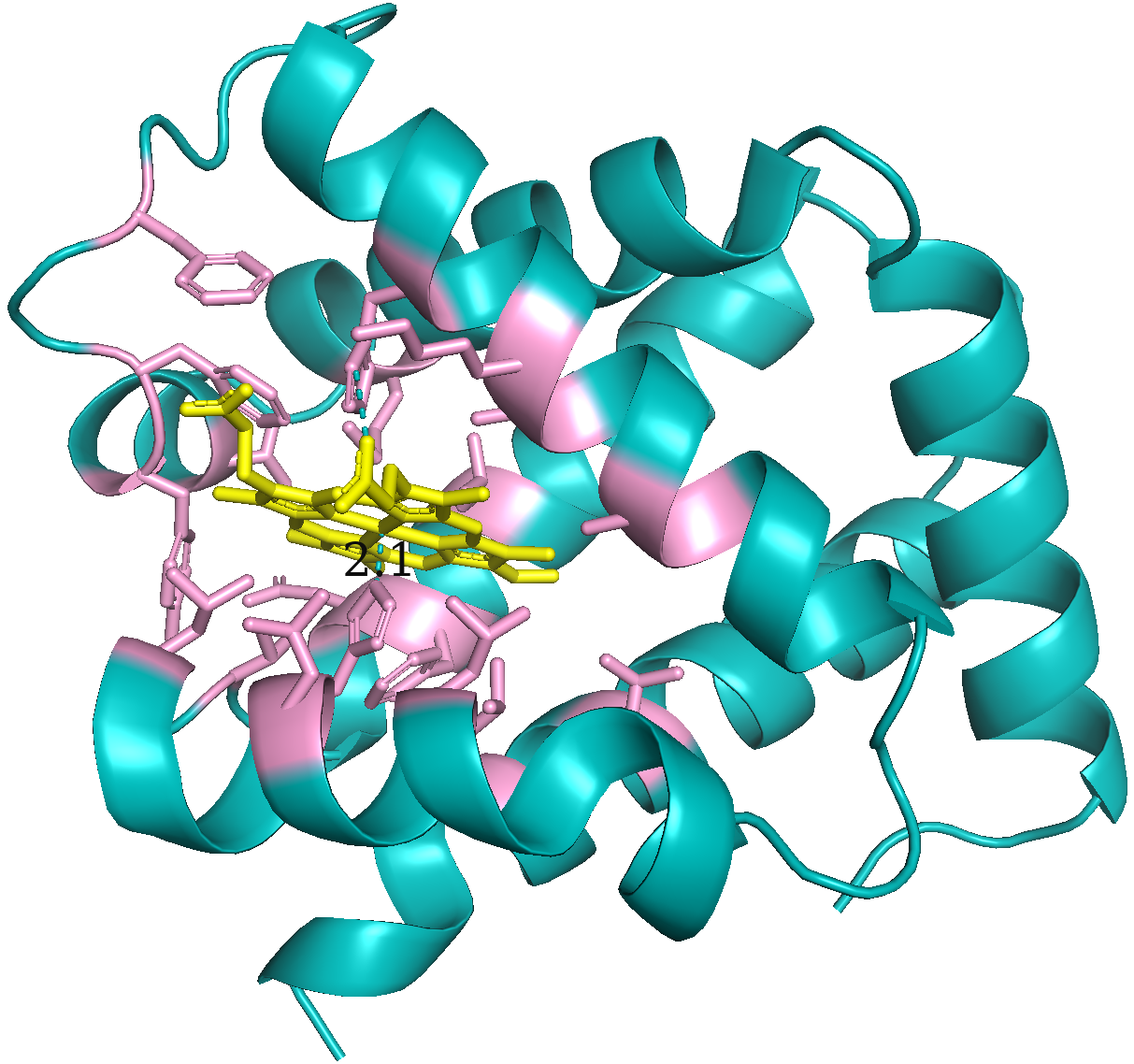}
%\caption{fig1}
\end{minipage}%
}%
\subfigure[case 2]{
\begin{minipage}[t]{0.33\linewidth}
\centering
\includegraphics[width=3.5cm]{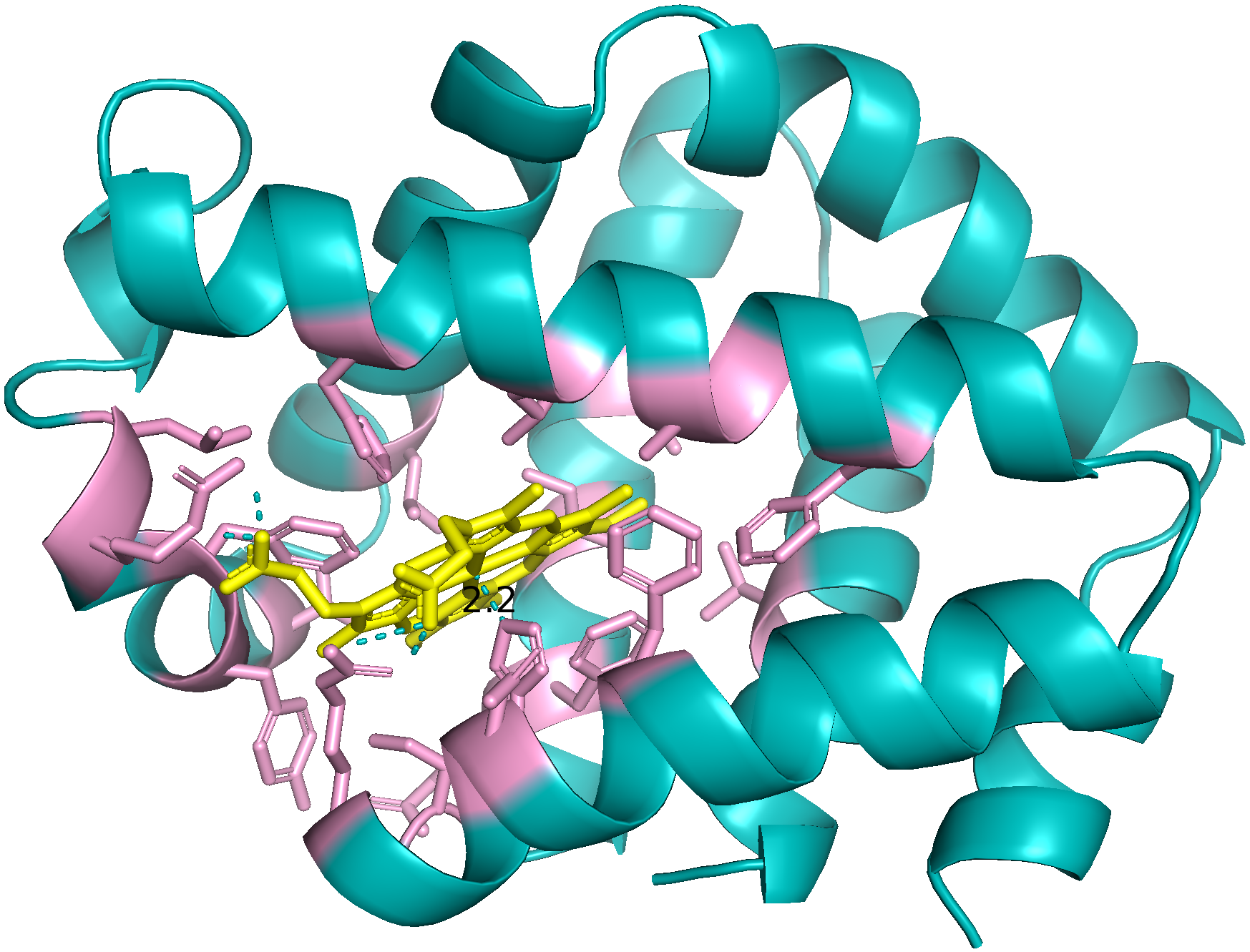}
%\caption{fig2}
\end{minipage}%
}%
\subfigure[case 3]{
\begin{minipage}[t]{0.33\linewidth}
\centering
\includegraphics[width=3.5cm]{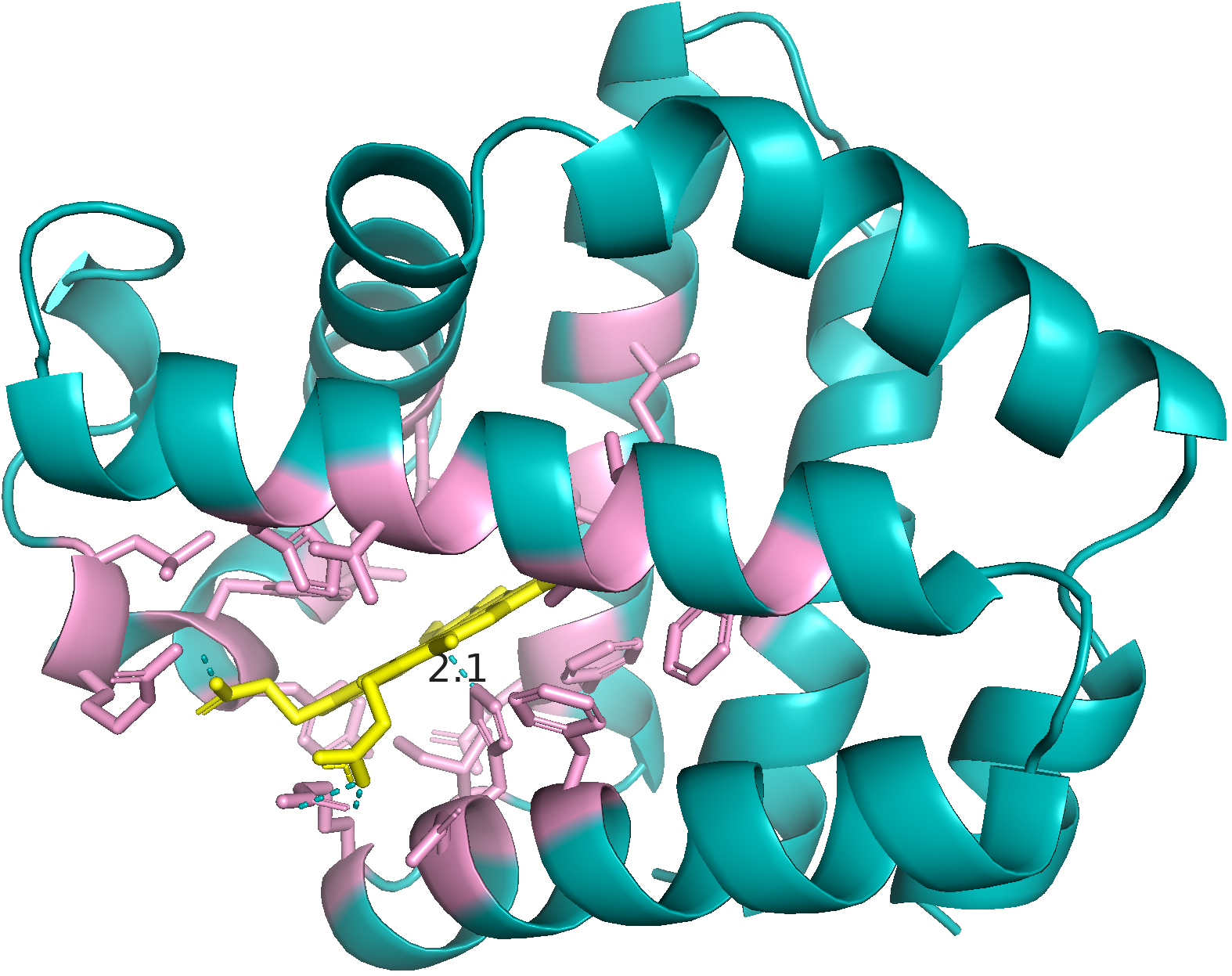}
%\caption{fig2}
\end{minipage}%
}%
\quad                 
\subfigure[case 4]{
\begin{minipage}[t]{0.33\linewidth}
\centering
\includegraphics[width=3.5cm]{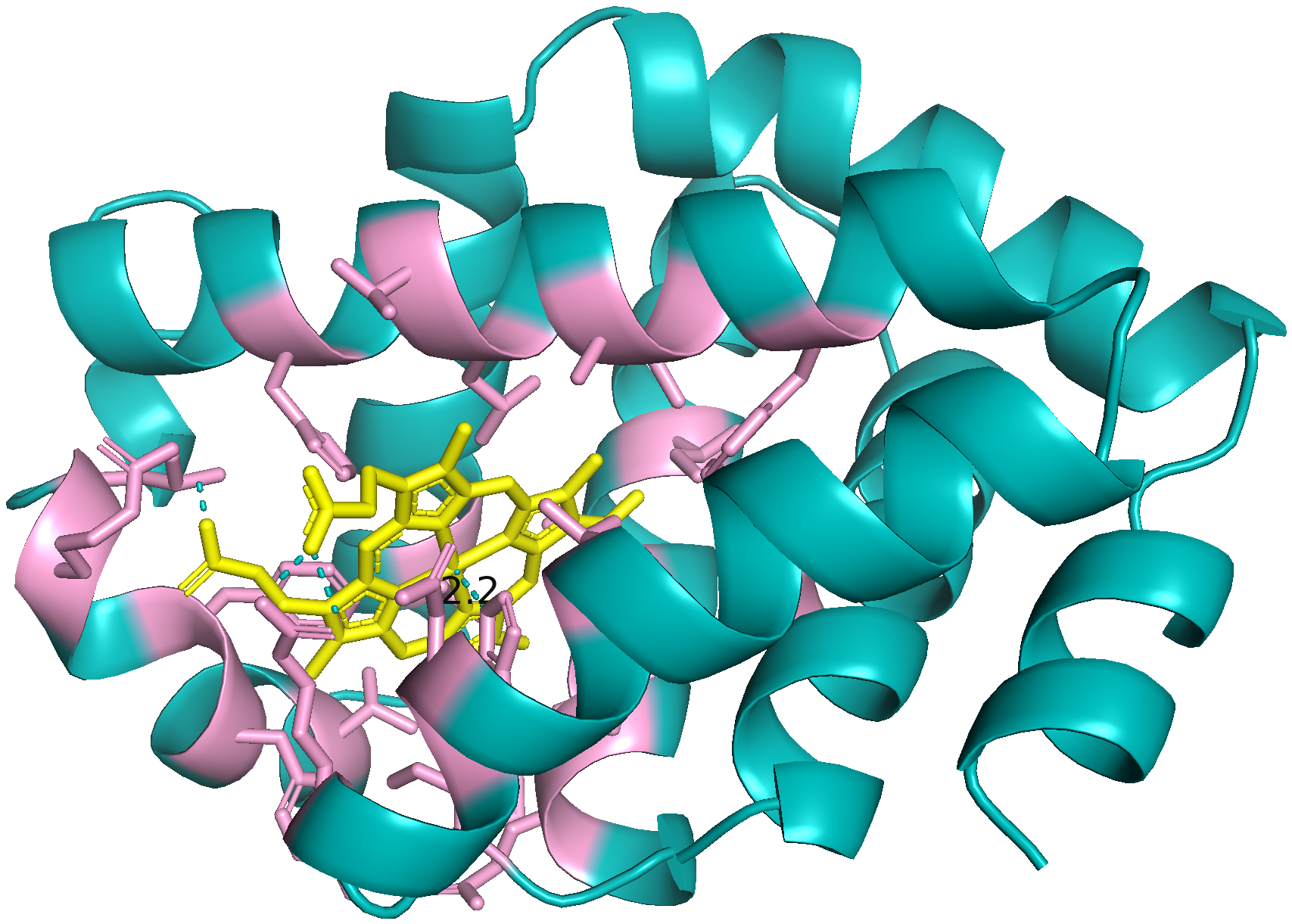}
%\caption{fig2}
\end{minipage}
}%
\subfigure[case 5]{
\begin{minipage}[t]{0.33\linewidth}
\centering
\includegraphics[width=3.5cm]{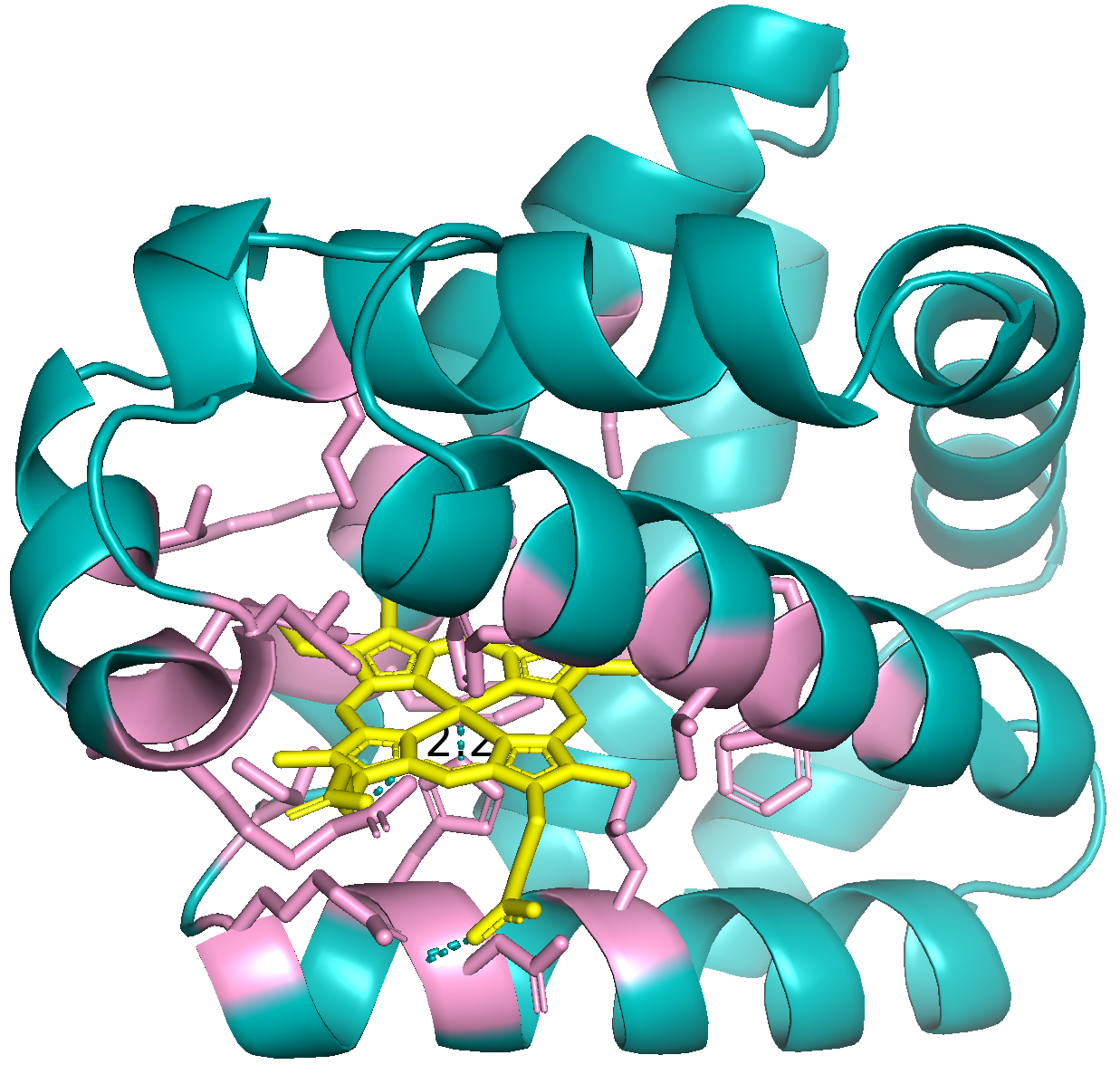}
%\caption{fig2}
\end{minipage}
}%
\subfigure[case 6]{
\begin{minipage}[t]{0.33\linewidth}
\centering
\includegraphics[width=3.5cm]{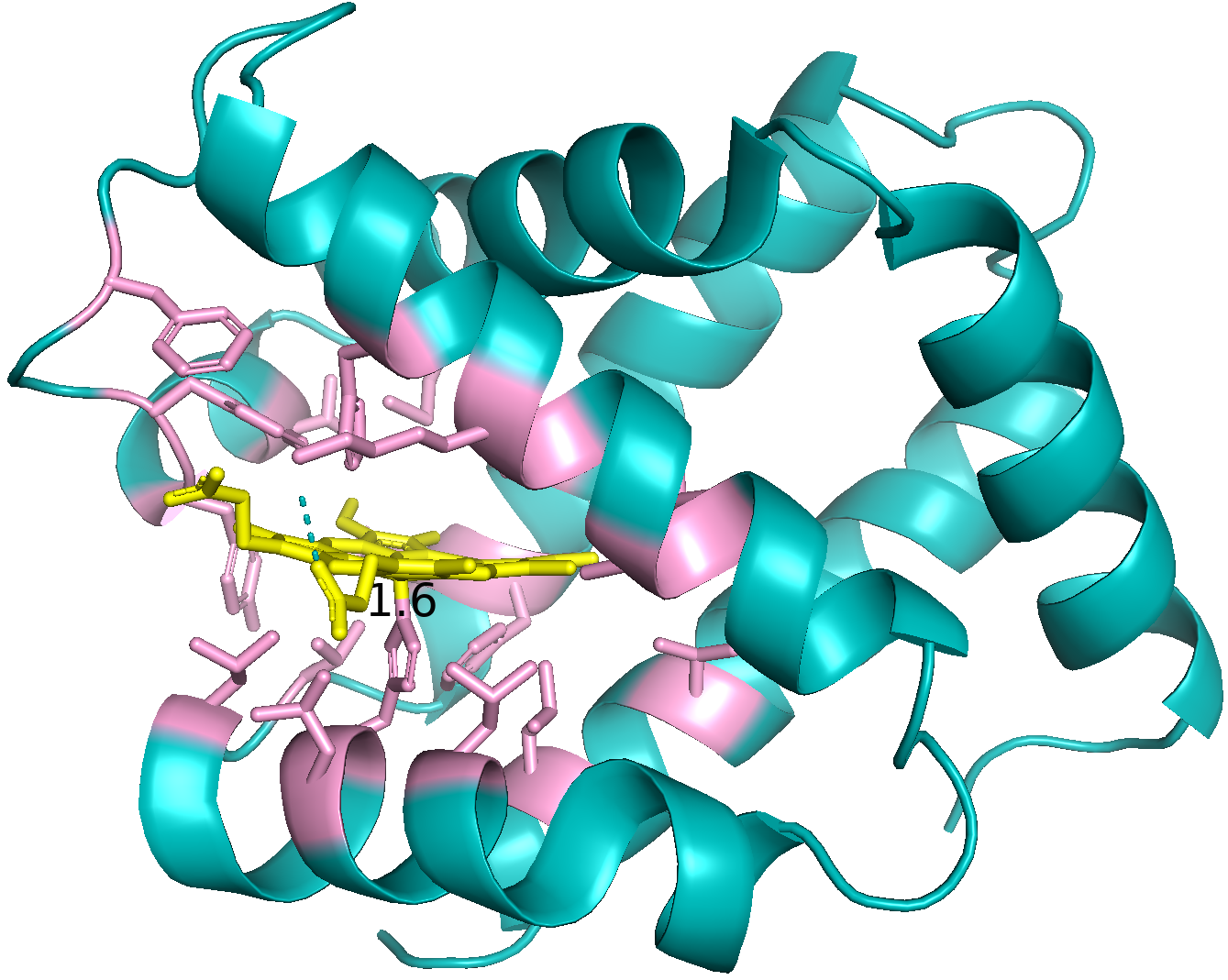}
%\caption{fig2}
\end{minipage}%
}%
\quad                 
\subfigure[case 7]{
\begin{minipage}[t]{0.33\linewidth}
\centering
\includegraphics[width=3.5cm]{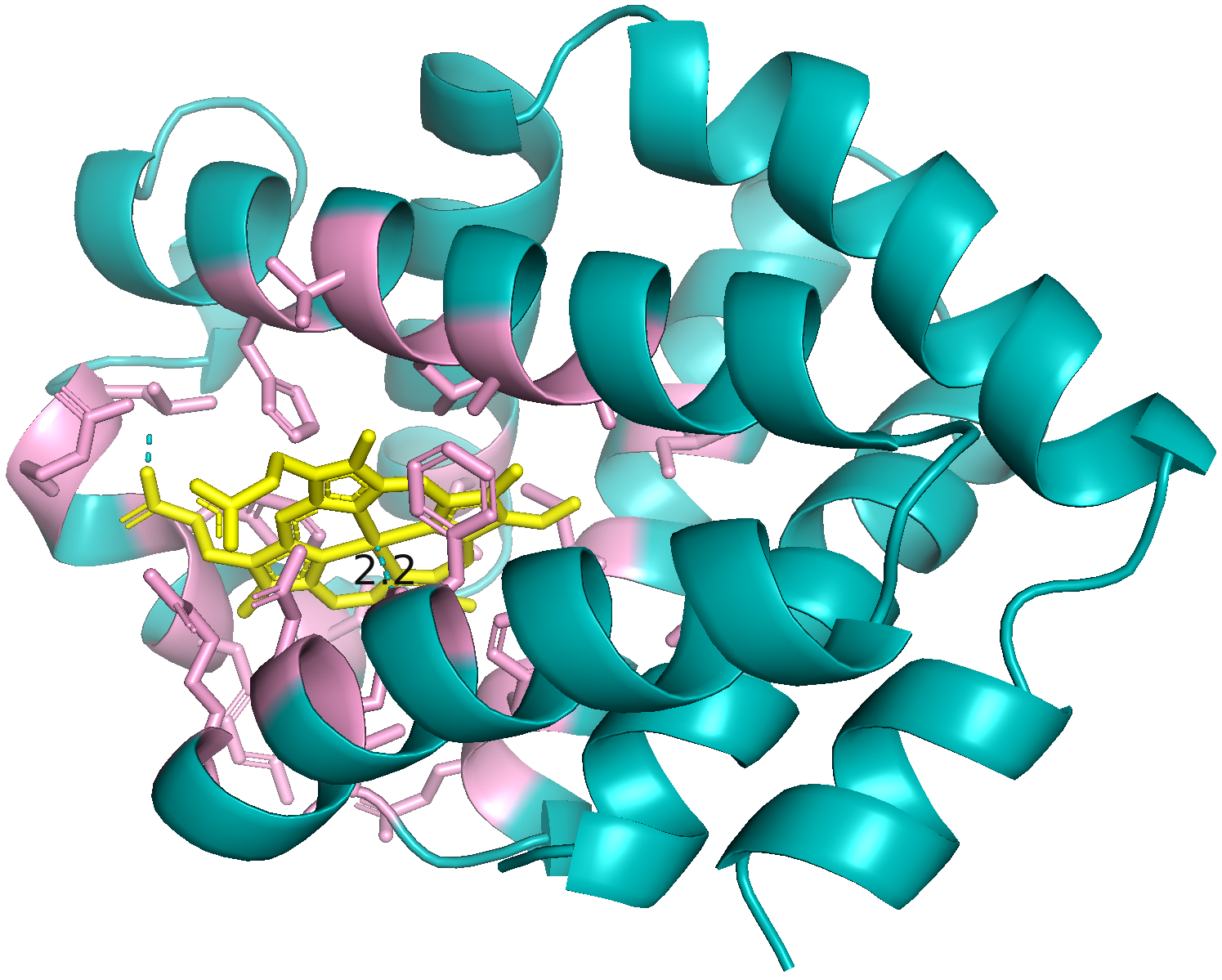}
%\caption{fig2}
\end{minipage}
}%
\subfigure[case 8]{
\begin{minipage}[t]{0.33\linewidth}
\centering
\includegraphics[width=3.5cm]{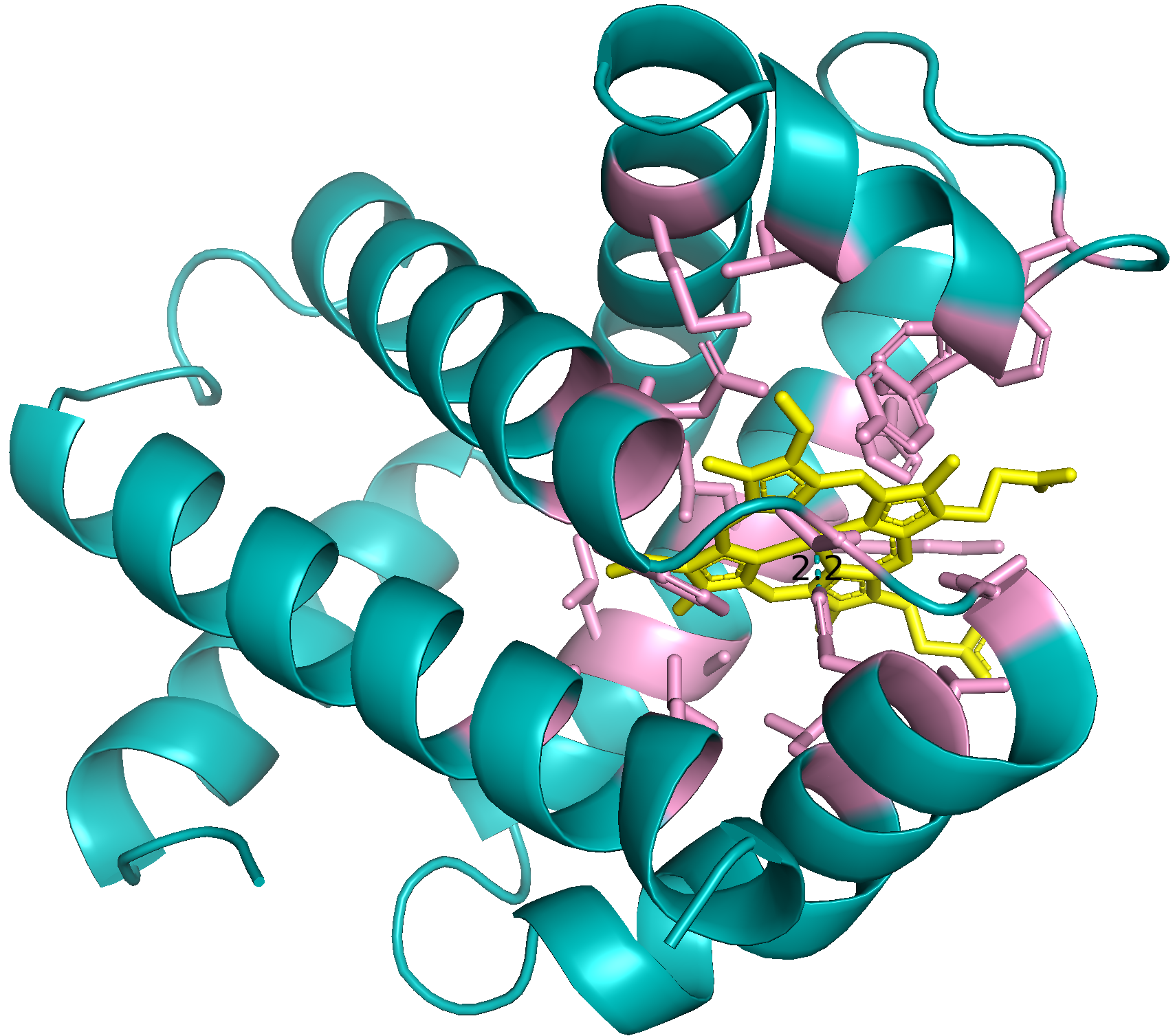}
%\caption{fig2}
\end{minipage}
}%
\subfigure[case 9]{
\begin{minipage}[t]{0.33\linewidth}
\centering
\includegraphics[width=3.5cm]{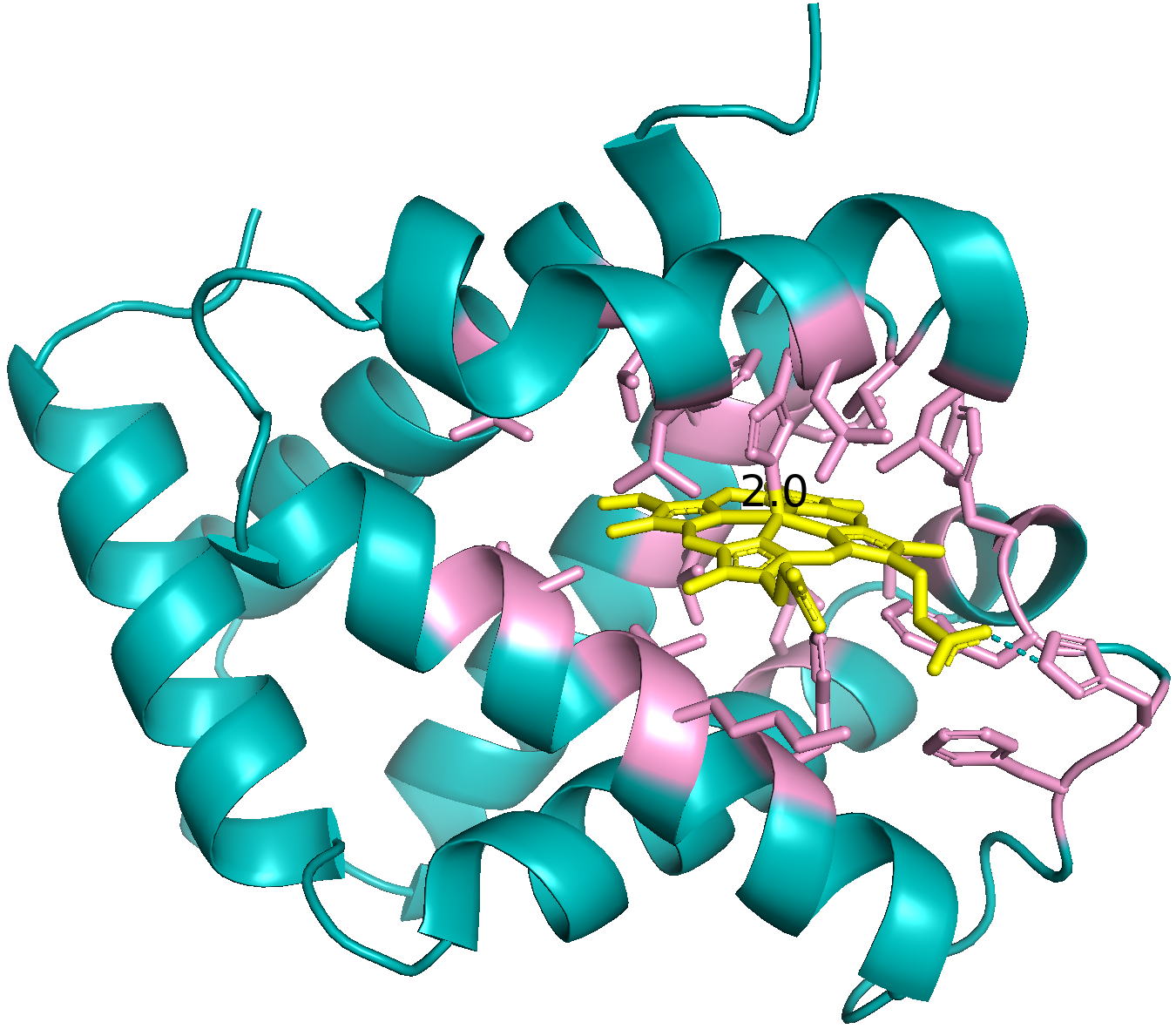}
%\caption{fig2}
\end{minipage}%
}%
\centering
\caption{More designed cases for myoglobin.}
\label{Fig:appendix_case_myoglobin}
\end{figure}

\section{More Designed Cases}
\label{appendix_cases}
Figure~\ref{Fig:appendix_case_beta} and \ref{Fig:appendix_case_myoglobin} respectively illustrate more designed proteins for $\beta$-lactamase and myoglobin. 
It shows all proteins exhibit active site environments reminiscent of those of natural proteins and can bind corresponding metallofactors, i.e., $\beta$-lactamases bind zinc ion and myoglobins bind heme, demonstrating their excellent potential to be biologically functional.
Besides, the sequences of these proteins are not present in PDB, and exhibit diverse structures, demonstrating our \model has the ability to design novel and diverse proteins with desired functions.

\end{document}